\newcolumntype{L}[1]{>{\raggedright\let\newline\\\arraybackslash\hspace{0pt}}m{#1}}
\newcolumntype{C}[1]{>{\centering\let\newline\\\arraybackslash\hspace{0pt}}m{#1}}
\newcolumntype{R}[1]{>{\raggedleft\let\newline\\\arraybackslash\hspace{0pt}}m{#1}}
\newcommand{\bp}{\begin{proof} \small }
\newcommand{\ep}{\end{proof} \normalsize}
\newcommand{\epx}{\end{proof} \small}
\newcommand{\bpa}{\begin{proofappx} \footnotesize }
\newcommand{\epa}{\end{proofappx} \small }
\newtheorem{theorem}{Theorem}
\newtheorem{lemma}{Lemma}
\newtheorem*{theorem*}{Theorem}
\newtheorem*{proposition*}{Proposition}
\newtheorem*{corollary*}{Corollary}
\newtheorem*{lemma*}{Lemma}
\newtheorem*{assumption*}{Assumption}
\newtheorem*{definition*}{Definition}
\newtheorem*{claim*}{Claim}
\newcommand{\be}{\begin{equation}}
\newcommand{\ee}{\end{equation}}
\newcommand{\bs}{\begin{subequations}}
\newcommand{\es}{\end{subequations}}
\newcommand{\bq}{\begin{eqnarray}}
\newcommand{\eq}{\end{eqnarray}}
\newcommand{\bqn}{\begin{eqnarray*}}
\newcommand{\eqn}{\end{eqnarray*}}
\newcommand{\ba}{\left[ \begin{array}}
\newcommand{\ea}{\\ \end{array} \right]}
\newcommand{\ben}{\begin{enumerate}}
\newcommand{\een}{\end{enumerate}}
\def\a{{\boldsymbol{a}}}
\def\x{{\boldsymbol{x}}}
\def\real{{\mathchoice%
{\hbox{\rm\setbox1=\hbox{I}\copy1\kern-.45\wd1 R}}
{\hbox{\rm\setbox1=\hbox{I}\copy1\kern-.45\wd1 R}}
{\hbox{\scriptsize\rm\setbox1=\hbox{I}\copy1\kern-.45\wd1 R}}
{\hbox{\scriptsize\rm\setbox1=\hbox{I}\copy1\kern-.45\wd1 R}}}}
\def\Zint{{\mathchoice{\setbox1=\hbox{\sf Z}\copy1\kern-.75\wd1\box1}
{\setbox1=\hbox{\sf Z}\copy1\kern-.75\wd1\box1}
{\setbox1=\hbox{\scriptsize\sf Z}\copy1\kern-.75\wd1\box1}
{\setbox1=\hbox{\scriptsize\sf Z}\copy1\kern-.75\wd1\box1}}}
\newcommand{\complex}{ \hbox{\rm C\kern-0.45em\rule[.07em]{.02em}{.58em}%
\kern 0.43em}}
\begin{document}
%
\title{Forecasting Popularity of Videos \\using Social Media}
%
%
%

\author{Jie~Xu,~\IEEEmembership{Student Member,~IEEE,}
        Mihaela~van~der~Schaar,~\IEEEmembership{Fellow,~IEEE,}\\
        Jiangchuan~Liu,~\IEEEmembership{Senior Member,~IEEE,}
        and~Haitao~Li,~\IEEEmembership{Student Member,~IEEE}
        \thanks{J. Xu and M. van der Schaar are with the Department of Electrical Engineering, University of California, Los Angeles, USA. (email: jiexu@ucla.edu; miheala@ee.ucla.edu)}
        \thanks{J. Liu and H. Li are with the School of Computing Science, Simon Fraser University, Burnaby, Canada. (email: jcliu@cs.sfu.ca; haitaol@sfu.ca.}
}

\maketitle

\begin{abstract}
This paper presents a systematic online prediction method (Social-Forecast) that is capable to accurately forecast the popularity of videos promoted by social media. Social-Forecast explicitly considers the dynamically changing and evolving propagation patterns of videos in social media when making popularity forecasts, thereby being situation and context aware.  Social-Forecast aims to maximize the forecast reward, which is defined as a tradeoff between the popularity prediction accuracy and the timeliness with which a prediction is issued. The forecasting is performed online and requires no training phase or a priori knowledge. We analytically bound the prediction performance loss of Social-Forecast as compared to that obtained by an omniscient oracle and prove that the bound is sublinear in the number of video arrivals, thereby guaranteeing its short-term performance as well as its asymptotic convergence to the optimal performance. In addition, we conduct extensive experiments using real-world data traces collected from the videos shared in RenRen, one of the largest online social networks in China. These experiments show that our proposed method outperforms existing view-based approaches for popularity prediction (which are not context-aware) by more than 30\% in terms of prediction rewards.
\end{abstract}

\begin{IEEEkeywords}
Situational and contextual awareness, social media, online social networks, popularity prediction, online learning, forecasting algorithm
\end{IEEEkeywords}

%
\IEEEpeerreviewmaketitle

\section{Introduction}
Networked services in the Web 2.0 era focus increasingly on the user participation in producing and interacting with rich media. The role of the Internet itself has evolved from the original use as a communication infrastructure, where users passively receive and consume media content to a social ecosystem, where users equipped with mobile devices constantly generate  media  data through a variety of sensors (cameras, GPS, accelerometers, etc.) and applications and, subsequently, share this acquired data through social media. Hence, social media is recently being used to provide situational awareness and inform predictions and decisions in a variety of application domains, ranging from live or on-demand event broadcasting, to security and surveillance~\cite{Trottier}, to health communication~\cite{Chou}, to disaster management~\cite{Sakaki},  to economic forecasting~\cite{Choi}. In all these applications, forecasting the popularity of the content shared in a social network is vital due to a variety of reasons. For network and cloud service providers, accurate forecasting facilitates prompt and adequate reservation of computation, storage, and bandwidth resources~\cite{Liu}, thereby ensuring smooth and robust content delivery at low costs. For advertisers, accurate and timely popularity prediction provides a good revenue indicator, thereby enabling targeted ads to be composed for specific videos and viewer demographics. For content producers and contributors, attracting a high number of views is paramount for attracting potential revenue through micro-payment mechanisms.

While popularity prediction is a long-lasting research topic~\cite{Szabo}~\cite{Cha}~\cite{Wu}~\cite{Pinto}, understanding how social networks affect the popularity of the media content and using this understanding to make better forecasts poses significant new challenges. Conventional prediction tools have mostly relied on the history of the past view counts, which worked well when the popularity solely depended on the inherent attractiveness of the content and the recipients were generally passive. In contrast, social media users are proactive in terms of the content they watch and are heavily influenced by their social media interactions; for instance, the recipient of a certain media content may further forward it or not, depending on not only its attractiveness, but also the situational and contextual conditions in which this content was generated and propagated through social media~\cite{Li}. For example, the latest measurement on Twitter's Vine, a highly popular short mobile video sharing service, has suggested that the popularity of a short video indeed depends less on the content itself, but more on the contributor's position in the social network~\cite{Zhang}. Hence, being situation-aware, e.g. considering the content initiator's information and the friendship network of the sharers, can clearly improve the accuracy of the popularity forecasts. However, critical new questions need to be answered: which situational information extracted from social media should be used, how to deal with dynamically changing and evolving situational information, and how to use this information efficiently to improve the forecasts?

As social media becomes increasingly more ubiquitous and influential, the video propagation patterns and users' sharing behavior dynamically change and evolve as well. Offline prediction tools~\cite{Szabo}~\cite{Galuba}~\cite{Hong}~\cite{Lerman} depend on specific training datasets, which may be biased or outdated, and hence may not accurately capture the real-world propagation patterns promoted by social media. Moreover, popularity forecasting is a \textit{multi-stage} rather than a single-stage task since each video may be propagated through a cascaded social network for a relatively long time and thus, the forecast can be made at any time while the video is being propagated. A fast prediction has important economic and technological benefits; however, too early a prediction may lead to a low accuracy that is less useful or even damaging (e.g. investment in videos that will not actually become popular). The timeliness of the prediction has yet to be considered in existing works~\cite{Cha}-\cite{Lerman}~\cite{Wu}~\cite{Pinto} which solely focus on maximizing the accuracy. Hence, we strongly believe that developing a systematic methodology for accurate and timely popularity forecasting is essential.

In this paper, we propose for the first time a systematic methodology and associated online algorithm for forecastingl popularity of videos promoted by social media. Our Social-Forecast algorithm is able to make predictions about the popularity of videos while jointly considering the accuracy and the timeliness of the prediction. We explicitly consider the unique situational conditions that affect the video propagated in social media, and demonstrate how this \textit{context information} can be incorporated to improve the accuracy of the forecasts. The unique features of Social-Forecast as well as our key contributions are summarized below:

\begin{itemize}
\item We rigorously formulate the online popularity prediction as a multi-stage sequential decision and online learning problem. Our solution, the Social-Forecast algorithm, makes multi-level popularity prediction in an online fashion, requiring no {\em a priori} training phase or dataset. It exploits the dynamically changing and evolving video propagation patterns through social media to maximize the prediction reward. The algorithm is easily tunable to enable tradeoffs between the accuracy and timeliness of the forecasts as required by various applications, entities and/or deployment scenarios.
\item We analytically quantify the regret of Social-Forecast, that is, the performance gap between its expected reward and that of the best prediction policy which can be only obtained by an omniscient oracle having complete knowledge of the video popularity trends. We prove that the regret is sublinear in the number of video arrivals, which implies that the expected prediction reward asymptotically converges to the optimal expected reward. The upper bound on regret also gives a lower bound on the convergence rate to the optimal average reward.
\item We validate Social-Forecast's performance through extensive experiments with real-world data traces from RenRen (the largest Facebook-like online social network in China). The results show that significant improvement can be achieved by exploiting the situational and contextual meta-data associated with the video and its propagation through the social media. Specifically, the Social-Forecast algorithm outperforms existing view-based approaches by more than 30\% in terms of prediction rewards.
\end{itemize}
The rest of the paper is organized as follows. Section II discusses related works. In Section III, we describe the system model and rigorously formulate the online popularity prediction problem. Section IV presents a systematic methodology for determining the optimal prediction policy with complete prior knowledge of the video propagation pattern. In Section V, we propose the online learning algorithm for the optimal prediction policy and prove that it achieves sublinear regret bounds. Section VI discusses the experimental results and our findings. Section VII concludes this paper.

\section{Related Works}
In this section, we review the representative related works from both the application and the theoretical foundation perspectives.

\subsection{Popularity Prediction for Online Content}
Popularity prediction of online content has been extensively studied in the literature. Early works have focused on predicting the future popularity of content (e.g. video) on conventional websites such as YouTube. Various solutions are proposed based on time series models like ARMA (Autoregressive moving average)~\cite{Niu}~\cite{Glzrsun}~\cite{Amondeo}, regression models~\cite{Wang}~\cite{Lee}~\cite{Rowe} and classification models~\cite{Wang}~\cite{Shamma}~\cite{Siersdorfer}. These methods are generally view-based, meaning that the prediction of the future views is solely based on the early views, while disregarding the situational context during propagation. For instance, it was found that a high correlation exists between the number of video views on early days and later days on YouTube~\cite{Cha}. By using the history of views within the past 10 days, the popularity of videos can be predicted up to 30 days ahead~\cite{Szabo}. While these predictions methods provide satisfactory performance for YouTube-like accesses, their performance is largely unacceptable~\cite{Li} when applied to predicting popularity in the social media context. This is because in this case the popularity of videos evolves in a significantly different manner which is highly influenced by the situational and contextual characteristics of the social networks in which the video has propagated~\cite{Li-IWQoS}.

Recently, there have been numerous studies aiming to accurately predicting the popularity of content promoted by social media~\cite{Chou}~\cite{Sakaki}~\cite{Asur}-\cite{Kooti}. For instance, a propagation model is proposed in~\cite{Galuba} to predict which users are likely to mention which URLs on Twitter. In~\cite{Hong}, the retweets prediction on Twitter is modeled as a classification problem, and a variety of context-aware features are investigated. For predicting the popularity of news in Digg, such aspects as website design have been incorporated~\cite{Lerman}, and for predicting the popularity of short messages, the structural characteristics of social media have been used~\cite{Bao}. For video sharing in social media, our earlier work~\cite{Li} has identified a series of context-aware factors which influence the propagation patterns.

Our work in this paper is motivated by these studies, but it is first systematic solution for forecasting the video popularity based on the situational and contextual characteristics of social media. First, existing works are mostly measurement-based and their solutions generally work offline, requiring existing training data sets. Instead, Social-Forecast operates entirely online and  does not require any a priori gathered training data set. Second, Social-Forecast is situation-aware and hence it can inherently adapt on-the-fly to the underlying social network structure and user sharing behavior. Last but not least, unlike the early empirical studies which employ only simulations to validate the performance of their predictions, we can rigorously prove performance bounds for Social-Forecast.

Importantly, our Social-Forecast can be easily extended to predict other trends in social media (such as predicting who are the key influencers in social networks, which tweets and news items may become viral, which content may become popular or relevant etc.) by exploiting contextual and situational awareness. For instance, besides popularity, social media has been playing an increasingly important role in predicting present or near future events. Early studies show that the volume and the frequency of Twitter posts can be used to forecast box-office revenues for movies~\cite{Asur} and detect earthquakes~\cite{Sakaki}. Sentiment detection is investigated in~\cite{Barbosa} by exploring characteristics of how tweets are written and meta-information of the words that compose these messages. In~\cite{Chou}, Google Trends uses search engine data to forecast near-term values of economic indicators, such as automobile sales, unemployment claims, travel destination planning, and consumer confidence. Social-Forecast can be easily adapted for deployment in these applications as well.

Table \ref{comparisontable} provides a comprehensive comparison between existing works on popularity prediction and Social-Forecast, highlighting their differences.

\subsection{Quickest Detection and Contextual Bandits Learning}
In our problem formulation, for each video, the algorithm can choose
to make a prediction decision using the currently observed context
information or wait to make this prediction until the next period,
when more context information arrives. This introduces a tradeoff
between accuracy and delay which relates to the literature on quickest
detection~\cite{Poor}~\cite{Krishnamruthy}~\cite{Lai} which
is concerned with the problem of detecting the change in the underlying
state (which has already occurred in the past). For example,
authors in~\cite{Lai} study how to detect the presence of primary
users by taking channel sensing samples in cognitive radio systems.
In the considered problem, there is no underlying state; in fact, the state is continuously and dynamically changing, and the problem becomes forecasting how it will evolve and which event will occur in the future. Moreover, many quickest detection solutions assume prior knowledge of the hypotheses
~\cite{Lai} while this knowledge is unknown a priori in our problem
and needs to be discovered over time to make accurate forecasts.

Our forecasting algorithm is based on the contextual bandits framework
~\cite{Tekin}-\cite{Chu} but with
significant innovations aimed at tackling the unique features of the
online prediction problem. First, most of the prior work~\cite{Slivkins}-\cite{Chu}
on contextual bandits is focused on an agent making a single-stage
decision based on the provided context information for each incoming
instance. In this paper, for each incoming video instance, the agent
needs to make a sequence of decisions at multiple stages. The context
information is stage-dependent and is revealed only when that stage
takes place. Importantly, the reward obtained by selecting an action
at one stage depends on the actions chosen at other stages and thus,
rewards and actions at different stages are coupled. Second, in existing
works~\cite{Tekin}-\cite{Chu}, the estimated rewards of an action can be updated only after the action is selected. In our problem, because the prediction action does not
affect the underlying popularity evolution, rewards can be computed
and updated even for actions that are not selected. In particular, we update the reward of an action as if it was selected. Therefore, exploration
becomes virtual in the sense that explicit explorations are not needed and hence, in each period, actions with the best estimated rewards can always be selected, thereby improving the learning performance.

\begin{table}
  \centering
  \includegraphics[scale=0.7]{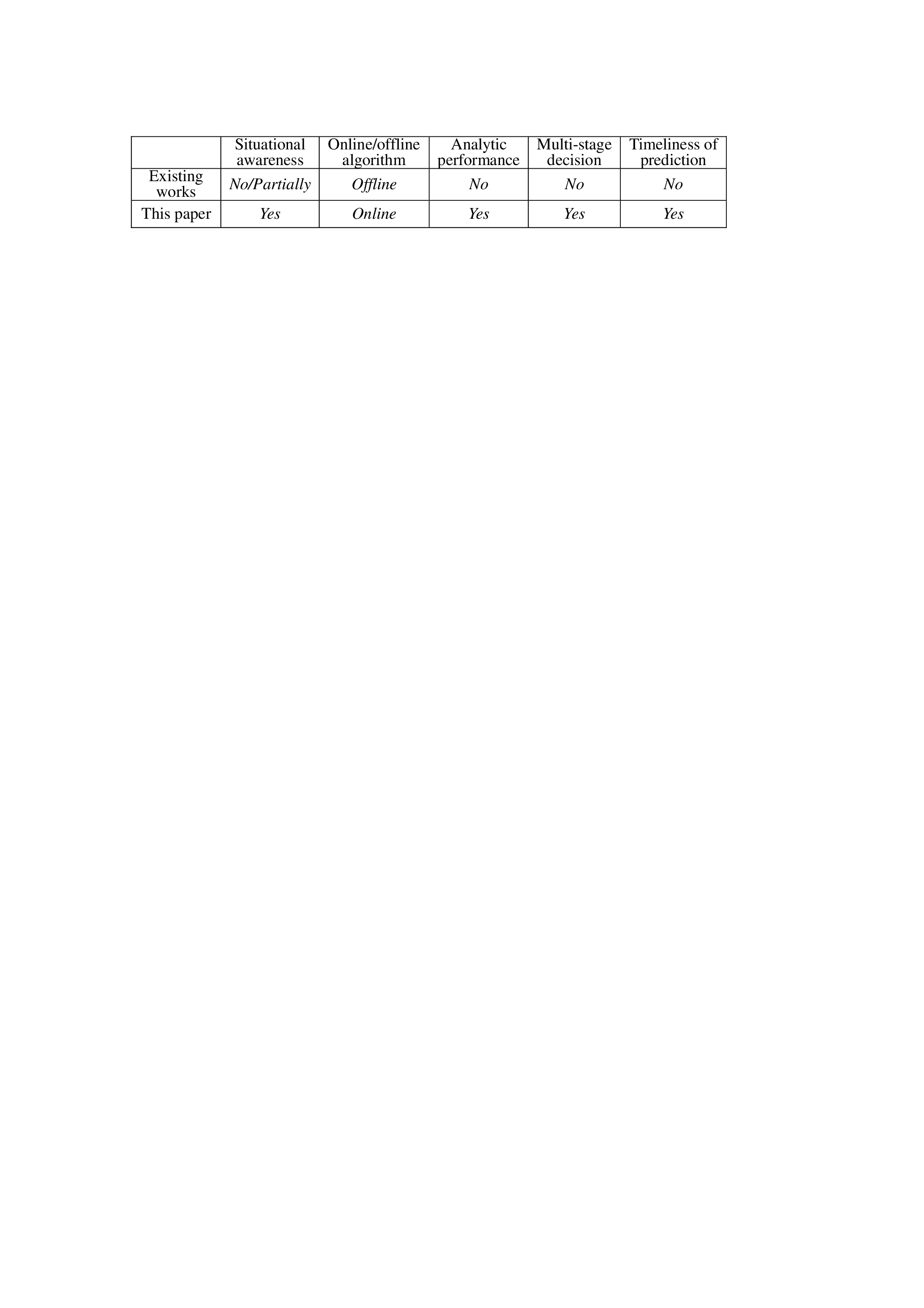}\\
  \caption{Comparison with existing works on popularity prediction for online content.}\label{comparisontable}
\end{table}

\section{System Model}
\begin{figure*}[t]
  \centering
  \includegraphics[scale=0.8]{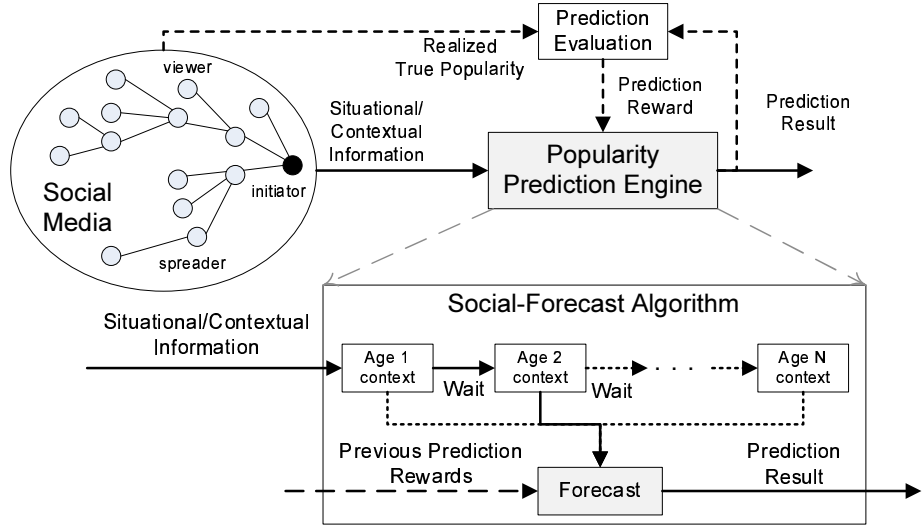}\\
  \caption{System diagram.}\label{system}
\end{figure*}

\subsection{Sharing Propagation and Popularity Evolution}
We consider a generic Web 2.0 information sharing system in which videos are shared
by users through social media (see Figure \ref{system} for
a system diagram). We assign each video with an index $k\in\{1,2,...,K\}$
according to the absolute time $t_{init}^{k}$ when it is initiated\footnote{It is easy to assign unique identifiers if multiplel videos which are generated/initiated at the same time.}. Once a video is initiated, it will be propagated through the social media for some time duration. We assume a discrete time model where a period can be minutes, hours, days, or any suitable time duration. A video is said to have an age of $n\in\{1,2,...\}$ periods if it has been propagated through the social media for $n$ periods. In each period, the video is further shared and viewed by users depending on the sharing and viewing status of the previous period. The propagation characteristics of video $k$ up to age $n$ are captured by a $d_{n}$-dimensional vector $\x_{n}^{k}\in\mathcal{X}_{n}$ which includes information such as the total number of views and other situational and contextual information such as the characteristics
of the social network over which the video was propagated. The specific
characteristics that we use in this paper will be discussed in Section
VI. In this section, we keep $\x_{n}^{k}$ in an abstract form and
call it succinctly the \textit{context (and situational) information}
at age $n$.

Several points regarding the context information are noteworthy. First, the context space $\mathcal{X}_{n}$ can be different at different ages $n$. In particular, $\x^k_n$ can include all history information of video $k$'s propagation characteristics up to age $n$ and hence $\x^k_{n}$ includes all information of $\x^k_{m},\forall m < n$ (See Figure \ref{contexthistory}). Thus the type of contextual/situational information is also age-dependent. Second, $\x_{n}^{k}$ can be taken from a large space, e.g. a finite space with a large number of values or even an infinite space. For example, some dimensions of $\x^k_n$ (e.g. the Sharing Rate used in Section VI) take values from a continuous value space and $\x^k_n$ may include all the past propagation characteristics (e.g. $\x^k_{m} \in \x^k_n, \forall m < n$). Third, at age $n$, $\x_{m}^{k},\forall m>n$ are not yet revealed since they represent
future situational and contextual information which is yet to be realized. Hence, given the context information $\x^k_n$ at age $n$, the future context information $\x_{m}^{k},\forall m>n$ are random variables.

\begin{figure}
  \centering
  \includegraphics[scale=0.9]{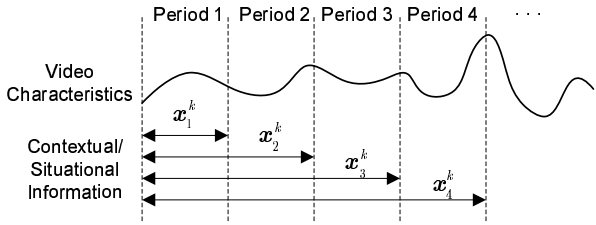}\\
  \caption{An illustration of context information taking the history characteristics.}\label{contexthistory}
\end{figure}

We are interested in predicting the future popularity status of the video by the end of a pre-determined age $N$, and we aim to make the prediction as soon as possible. The choice of $N$ depends on the specific requirements of the content provider, the advertiser and the web hosts. In this paper, we will treat $N$ as given\footnote{This assumption is generally valid given that the video sharing events have daily and weekly patterns, and the active lifespans of most shared videos through social media are quite limited~\cite{Li-IWQoS}}. Thus, the context information for video $k$ during its lifetime of $N$ periods is collected in $\x^{k}=(\x_{1}^{k},\x_{2}^{k},...,\x_{N}^{k})$. For expositional simplicity, we also define $\x_{n^{+}}=(\x_{n+1},...,\x_{N})$, $\x_{n^{-}}=(\x_{1},...,\x_{n-1})$ and $\x_{-n}=(\x_{n^{-}},\x_{n^{+}})$.

Let $\mathcal{S}$ be the popularity status space, which is assumed to be finite. For instance, $\mathcal{S}$ can be either a binary space \{Popular, Unpopular\} or a more refined space containing multiple levels of popularity such as \{Low Popularity, Medium Popularity, High Popularity\} or any such refinement. We let $s^k$ denote the popularity status of video $k$ by the end of age $N$. Since $s^k$ is realized only at the end of $N$ periods, it is a random variable at all previous ages. However, the conditional distribution of $s^k$ will vary at different ages since they are conditioned on different context information. In many scenarios, the conditional distribution at a higher age $n$ is more informative for the future popularity status since more contextual information has arrived. Nevertheless, our model does not require this assumption to hold.

\subsection{Prediction Reward}
For each video $k$, at each age $n=1,...,N$, we can make a prediction decision $a^k_n \in \mathcal{S}\cup \{\textrm{Wait}\}$. If $a^k_n \in \mathcal{S}$, we predict $a^k_n$ as the popularity status by age $N$. If $a^k_n = \textrm{Wait}$, we choose to wait for the next period context information to decide (i.e. predict a popularity status or wait again). For each video $k$, at the end of age $N$, given the decision action vector $\a^k$, we define the {\it age-dependent reward} $r^k_n$ at age $n$ as follows,
\begin{equation}\label{reward}
r^k_n = \left\{
\begin{array}{ll}
U(a^k_n, s^k, n), &\textrm{if}~a^k_n \in \mathcal{S}\\
r^k_{n+1}, &\textrm{if}~a^k_n = \textrm{Wait}
\end{array}\right.
\end{equation}
where $U(a^k_n, s^k, n)$ is a reward function depending on the accuracy of the prediction (determined by $a^k_n$ and the realized true popularity status $s^k$) and the timeliness of the prediction (determined by the age $n$ when the prediction is made).

The specific form of $U(a^k_n, s^k, n)$ depends on how the reward is derived based on the popularity prediction based on various economical and technological factors. For instance, the reward can the ad revenue derived from placing proper ads for potential popular videos or the cost spent for adequately planning computation, storage, and bandwidth resources to ensure the robust operation of the video streaming services. Even though our framework allows any general form of the reward function, in our experiments (Section VI), we will use a reward function that takes the form of $U(a^k_n, s^k, n) = \theta(a^k_n, s^k) + \lambda \psi(n)$ where $\theta(a^k_n, s^k)$ measures the prediction accuracy, $\psi(n)$  accounts for the prediction timeliness and $\lambda > 0$ is a trade-off parameter that controls the relative importance of accuracy and timeliness.

Let $n^*$ be the first age at which the action is not ``Wait'' (i.e. the first time a forecast is issued). The {\it overall prediction reward} is defined as the $r^k = r^k_{n^*}$. According to equation \eqref{reward}, when the action is ``Wait'' at age $n$, the reward is the same as that at age $n+1$. Thus $r^k_{1} = r^k_{2} = ... = r^k_{n^*}$. This suggests that the overall prediction reward is the same as the age-dependent reward at age 1, i.e. $r^k = r^k_1$. For age $n > n^*$, the action $a^n_k$ and the age-dependent reward $r^k_n$ do not affect the realized overall prediction result since a prediction has already been made. However, we still select actions and compute the age-dependent reward since it helps learning the best action and the best reward for this age $n$ which in turn will help decide whether or not we should wait at an early age. Figure \ref{msd} provides an illustration on how the actions at different ages determine the overall prediction reward.

\begin{figure}
  \centering
  \includegraphics[scale=0.9]{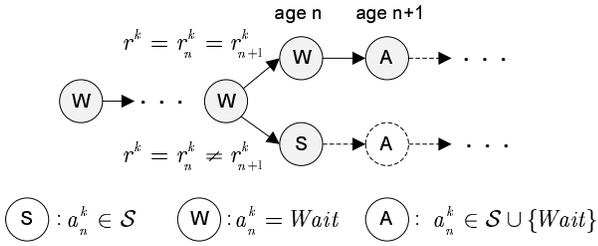}\\
  \caption{An illustration for the multi-stage decision making. The first $n-1$ action is ``Wait''. If the age-$n$ action is ``Wait'', then $r^k_n = r^k_{n+1}$ which depends on later actions. If the age-$n$ action is not ``Wait'', then $r^k_{n} \neq r^k_{n+1}$ and $r^k$ does not depend on later actions. However, we can still learn the reward of action at age $n+1$ as if all actions before $n+1$ were ``Wait''. }\label{msd}
\end{figure}

{\it Remark}: The prediction action itself does not generate rewards. It is the action (e.g. online ad investment) taken using the prediction results that is rewarding. In many scenarios, this action can only be taken once and cannot be altered afterwards. This motivates the above overall reward function formulation in which the overall prediction reward is determined by the first non-``Wait'' action. Nevertheless, our framework can also be easily extended to account for more general overall reward functions which may depend on all non-``Wait'' actions. For instance, the action may be revised when a more accurate later prediction is made. In this case, the reward function $U(a^k_n, s^k, n)$ in \eqref{reward} will depend on not only the current prediction action $a^k_n \in \mathcal{S}$ but also all non-``Wait'' actions after age $n$. We will use the reward function in \eqref{reward} because of its simplicity for the exposition but our analysis also holds for general reward functions.

\subsection{Prediction Policy}
In this paper, we focus on prediction policies that depend on the
current contextual information. Let $\pi_{n}:\mathcal{X}_{n}\to\mathcal{S}\cup \{\textrm{Wait}\}$
denote the prediction policy for a video link of age $n$ and $\pi=(\pi_{1},...,\pi_{N})$
be the complete prediction policy. Hence, a prediction policy $\pi$ prescribes actions for all possible context information at all ages. For expositional simplicity, we also define $\pi_{n^{+}}=(\pi_{n+1},...,\pi_{N})$ as the policy vector
for ages greater than $n$, $\pi_{n^{-}}=(\pi_{1},...,\pi_{n-1})$
as the policy vector for ages smaller than $n$ and $\pi_{-n}=(\pi_{n^{-}},\pi_{n^{+}})$.
For a video with context information $\x^{k}$, the prediction policy
$\pi$ determines the prediction action at each age and hence the overall prediction reward, denoted by $r(\x|\pi)$, as well as the age-dependent rewards $r_n(\x|\pi), \forall n = 1,...,N$. Let $f(\x)$ be the probability
distribution function of the video context information, which also gives information of the popularity evaluation patterns. The expected
prediction reward of a policy $\pi$ is therefore,
\begin{align}
V(\pi) = \int_{\x\in\mathcal{X}} r(\x|\pi) f(\x) d\x
\end{align}
Note that the age-$n$ policy $\pi_n$ will only use the context information $\x_n$ rather than $\x$ to make predictions since $\x_{n^+}$ has not been realized at age $n$.

Our objective is to determine the optimal policy $\pi^{opt}$ that
maximizes the expected prediction reward, i.e. $\pi^{opt}=\arg\max\limits _{\pi}V(\pi)$.
In the following sections, we will propose a systematic methodology
and associated algorithms that find the optimal policy for the case
when $f(\x)$ is known or unknown, which are referred to as the complete
and incomplete information scenarios, respectively.

\section{Why Online Learning is Important?}
In this section, we consider the optimal policy design problem with the complete information of the context distribution $f(\x)$ and compute the optimal policy $\pi^{opt}$. In the next section in which $f(\x)$ is unknown, we will learn this optimal policy $\pi^{opt}$ online and hence, the solution that we derive in this section will serve as the benchmark. Even when having the complete information, determining the optimal prediction policy faces great challenges: first, the prediction reward depends on \textit{all} decision actions at \textit{all} ages; and second, when making the decision at age $n$, the actions for ages larger than $n$ are not known since the corresponding context information has not been realized yet.

Given policies $\pi_{-n}$, we define the expected reward when taking action $a_n$ for $\x_n$ as follows,
\begin{align}
\mu_n(\x'_n|\pi_{-n},a_n) = \int_{\x} I_{\x_n = \x'_n} r_n(\x|\pi_{-n}, a_n) f(\x) d\x
\end{align}
where $I_{\x_n = \x'_n}$ is an indicator function which takes value 1 when the age-$n$ context information is $\x'_n$ and value 0 otherwise. The optimal $\pi^*(\pi_{-n})$ given $\pi_{-n}$ thus can be determined by
\begin{equation}\label{best_response}
\pi_n^*(\x_n|\pi_{-n}) = \arg\max\limits_{a}\mu(\x_n|\pi_{-n},a), \forall \x_n
\end{equation}
and in which we break ties deterministically. Equation \eqref{best_response} defines a best response function from a policy to a new policy $F: \Pi \to \Pi$ where $\Pi$ is the space of all policies. In order to compute the optimal policy $\pi^{opt}$, we iteratively use the best response function in \eqref{best_response} using the output policy computed in the previous iteration as the input for the new iteration. Note that a computation iteration is different from a time period. ``Period'' is used to describe the time unit of the discrete time model of the video propagation. A period can be a miniute, an hour or any suitable time duration. In each period, the sharing and viewing statistics of a {\it specific} video may change. ``Iteration'' is used for the (offline) computation method for the optimal policy (which prescribes actions for {\it all} possible context information in {\it all} periods). Given the complete statistical information (i.e. the video propagation characteristics distribution $f(\x)$) of videos, a new policy is computed using best response update in each iteration.

We prove the convergence and optimality of this best response update as follows.
\begin{lemma}
$\pi_n^*(\x_n|\pi_{-n})$ is independent of $\pi_m, \forall m < n$, i.e. $\pi_n^*(\x_n|\pi_{-n}) = \pi_n^*(\x_n|\pi_{n^+})$.
\end{lemma}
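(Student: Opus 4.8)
The plan is to exploit the purely forward-looking structure of the age-dependent reward defined in \eqref{reward}. The crucial observation is that $r_n(\x|\pi_{-n},a_n)$ is generated by a recursion that only ever refers to the reward at the \emph{next} age $r_{n+1}$, and never to any action or reward at an earlier age. Consequently the earlier policies $\pi_m$ with $m<n$ could influence $\mu_n$ only through the actions taken at ages below $n$ or through the distribution $f(\x)$; I will argue that neither channel is active, so that $\mu_n$, and hence its maximizer in \eqref{best_response}, is a function of $\pi_{n^+}$ alone.

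First I would establish, by backward induction on the age, that for any fixed $a_n$ the age-$n$ reward $r_n(\x|\pi_{-n},a_n)$ depends on the policy only through $\pi_{n^+}=(\pi_{n+1},\dots,\pi_N)$. For the base case at age $N$, no further waiting is possible, so the action lies in $\mathcal{S}$ and by \eqref{reward} the reward equals $U(a_N,s^k,N)$, which involves no policy at all. For the inductive step at age $n$: if $a_n\in\mathcal{S}$ then $r_n=U(a_n,s^k,n)$ is again policy-free; if $a_n=\textrm{Wait}$ then \eqref{reward} gives $r_n=r_{n+1}$, and the age-$(n{+}1)$ reward is produced by applying $\pi_{n+1}$ to the realized context $\x_{n+1}$ and then recursing, which by the induction hypothesis depends only on $\pi_{n+1},\dots,\pi_N$. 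In either case $r_n(\x|\pi_{-n},a_n)$ is independent of $\pi_{n^-}=(\pi_1,\dots,\pi_{n-1})$.

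Next I would feed this into the definition of $\mu_n$. Since $r_n(\x|\pi_{-n},a_n)$ does not depend on $\pi_{n^-}$, and since the context distribution $f(\x)$ is fixed independently of the prediction policy (the prediction action does not affect the underlying popularity evolution, as emphasized in Section III), the integral
\begin{equation*}
\mu_n(\x'_n|\pi_{-n},a_n)=\int_{\x} I_{\x_n=\x'_n}\, r_n(\x|\pi_{-n},a_n)\, f(\x)\, d\x
\end{equation*}
likewise depends on the policy only through $\pi_{n^+}$. Taking the $\arg\max$ over $a$ preserves this independence, since ties are broken by a fixed deterministic rule that consumes no policy information, and this yields exactly $\pi_n^*(\x_n|\pi_{-n})=\pi_n^*(\x_n|\pi_{n^+})$, as claimed.

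The step I expect to require the most care is the induction above, specifically pinning down the claim that earlier actions never feed back into $r_n$. One might worry that choosing $\textrm{Wait}$ at ages below $n$ is what ``allows'' the process to reach age $n$, so that $\pi_{n^-}$ implicitly conditions the reward. The resolution is that $\mu_n$ is a best-response quantity that already fixes the age-$n$ action to $a_n$ and evaluates $r_n$ regardless of what was chosen earlier, mirroring the remark in Section III that the age-$n$ reward is computed ``as if all actions before $n$ were Wait.'' Once this is stated cleanly, the forward recursion in \eqref{reward} does all the work and the policy-invariance of $f(\x)$ closes the argument.
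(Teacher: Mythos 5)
Your proposal is correct and follows essentially the same route as the paper: the paper's proof is a one-line observation that, by the definition of the age-dependent reward in \eqref{reward}, actions before age $n$ cannot affect the age-$n$ reward, so the best response depends only on $\pi_{n^+}$. Your backward induction on the recursion in \eqref{reward}, combined with the policy-invariance of $f(\x)$, is simply a more detailed write-up of that same observation.
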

\begin{proof}
By the definition of age-dependent reward, the prediction actions before age $n$ does not affect the age-$n$ reward. Hence, the optimal policy depends only on the actions after age $n$.
\end{proof}

Lemma 1 shows that the optimal policy $\pi_n$ at age $n$ is fully determined by the policies for ages larger than $n$ but does not depend on the policies for ages less than $n$. Using this result, we can show the best response algorithm converges to the optimal policy within a finite number of computation iterations.

\begin{theorem}
Starting with any initial policy $\pi^0$, the best response update converges to a unique point $\pi^*$ in $N$ computation iterations. Moreover, $\pi^* = \pi^{opt}$.
\end{theorem}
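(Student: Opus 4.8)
The plan is to identify the limit of the best-response iteration with the policy produced by backward induction, and then to verify separately that (i) this policy is the unique fixed point reached after exactly $N$ iterations and (ii) it coincides with $\pi^{opt}$. Define a reference policy $\hat\pi=(\hat\pi_1,\dots,\hat\pi_N)$ recursively \emph{from the last age backwards}: let $\hat\pi_N$ be the best response of \eqref{best_response} when the future-policy vector is empty (there is no age beyond $N$), and for $n<N$ let $\hat\pi_n$ be the best response to $\hat\pi_{n^+}=(\hat\pi_{n+1},\dots,\hat\pi_N)$. Because ties are broken deterministically, each stage is uniquely determined, so $\hat\pi$ is well defined and unique.

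First I would establish that $\hat\pi$ is the \emph{only} fixed point of $F$. If $\pi^*=F(\pi^*)$, then by Lemma~1 each component satisfies $\pi_n^*=\pi_n^*(\cdot\mid\pi_{n^+}^*)$, i.e. it is the best response to its own future alone. Running this identity from $n=N$ downward gives $\pi_N^*=\hat\pi_N$ (empty future), then $\pi_{N-1}^*=\hat\pi_{N-1}$, and so on, so $\pi^*=\hat\pi$. Conversely, $\hat\pi$ is a fixed point by its own construction together with Lemma~1 (the best response at each age uses only the future components, which $\hat\pi$ leaves unchanged).

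Next I would prove the $N$-iteration convergence by a cascading induction showing that the correct values propagate one age backward per iteration: after $t$ applications of $F$, the components satisfy $\pi_n^t=\hat\pi_n$ for every $n\ge N-t+1$, and these components are thereafter frozen. The base case $t=1$ is immediate, since $\pi_N^1$ is the best response to an empty future regardless of $\pi^0$, so $\pi_N^1=\hat\pi_N$. For the inductive step, the age-$(N-t)$ update in iteration $t+1$ is, by Lemma~1, the best response to $(\pi_{N-t+1}^t,\dots,\pi_N^t)=(\hat\pi_{N-t+1},\dots,\hat\pi_N)$, which by definition equals $\hat\pi_{N-t}$, while the already-correct later components recompute the same best responses and remain fixed. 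After $N$ iterations every component equals $\hat\pi$, and since $\hat\pi$ is a fixed point the iteration has converged; uniqueness follows from the previous paragraph.

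Finally, and this is the step I expect to be the crux, I would show $\hat\pi=\pi^{opt}$ by a dynamic-programming argument on the conditional expected age-$n$ rewards. Because maximizing $\mu_n(\x_n\mid\pi_{-n},a)$ over $a$ for a fixed context value is equivalent, up to the nonnegative marginal-density factor, to maximizing the conditional expectation $\Ex[\,r_n(\x\mid\pi)\mid\x_n\,]$, and because \eqref{reward} makes the age-$n$ reward equal to $U(a_n,s,n)$ when $a_n\in\mathcal{S}$ and equal to the age-$(n+1)$ reward when $a_n=\textrm{Wait}$, the optimal age-$n$ decision for each context is exactly ``take the better of the best immediate prediction and the optimal continuation value.'' Backward induction on $n$ then shows that $\hat\pi$ attains, for every context $\x_n$, the largest possible conditional expected age-$n$ reward over all choices of $(\pi_n,\dots,\pi_N)$: the base case $n=N$ holds by the definition of $\hat\pi_N$, and the inductive step uses the hypothesis that $\hat\pi_{n^+}$ already maximizes the continuation (age-$(n+1)$) reward pointwise, together with Lemma~1, which guarantees that $r_n$ is unaffected by the earlier policies $\pi_m,\ m<n$. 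Specializing to $n=1$ and integrating against $f$ yields $V(\hat\pi)\ge V(\pi)$ for every $\pi$, so $\hat\pi=\pi^{opt}$. The main obstacle is precisely this optimality step: one must verify that independent, per-context maximization of the \emph{stagewise} conditional rewards assembles into a maximizer of the \emph{overall} reward $V$, which hinges on the clean separation provided by Lemma~1 and on the exactness of the Wait-deferral in \eqref{reward} (and, for continuous context spaces, on reading the pointwise optimality up to $f$-null sets).
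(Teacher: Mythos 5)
Your construction of $\hat\pi$ by backward induction, the uniqueness of the fixed point under deterministic tie-breaking, and the cascading induction showing that one more component becomes correct (and stays frozen) per iteration are exactly the paper's own proof, written out in full detail --- the paper's text is a four-sentence version of the same induction from age $N$ downward. Those parts are correct. The divergence is at the final step $\hat\pi=\pi^{opt}$, which you rightly identify as the crux: the paper asserts it in a single clause, while you attempt a dynamic-programming argument. That argument has a genuine gap, and it is not located where you place it.

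Your inductive step needs the implication: if $\hat\pi_{n^+}$ maximizes $\mathbb{E}[r_{n+1}(\x|\pi_{n^+})\mid\x_{n+1}]$ pointwise in $\x_{n+1}$, then it also maximizes the continuation value $\mathbb{E}[r_{n+1}(\x|\pi_{n^+})\mid\x_n]$ at every $\x_n$. This requires the tower identity $\mathbb{E}[r_{n+1}\mid\x_n]=\mathbb{E}\bigl[\mathbb{E}[r_{n+1}\mid\x_{n+1}]\mid\x_n\bigr]$, which is valid only when conditioning on $\x_{n+1}$ subsumes conditioning on $\x_n$ --- i.e., the contexts are nested ($\x_n$ recoverable from $\x_{n+1}$, as in the paper's Figure 2) or $(\x_{(n+1)^+},s)$ is conditionally independent of $\x_n$ given $\x_{n+1}$. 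Lemma 1 and the exactness of the Wait-deferral in \eqref{reward}, which you claim the assembly ``hinges on,'' do not supply this: Lemma 1 concerns independence from earlier \emph{actions}, not the information structure of the contexts. Concretely, take $N=2$, $\mathcal{S}=\{P,U\}$, $\theta(a,s)=1$ if $a=s$ and $0$ otherwise, $\psi(n)=N-n$, $0<\lambda<1/2$; let $\x_1\in\{A,B\}$ be equally likely; under $A$, $s$ is $P$ or $U$ with probability $1/2$ each and $\x_2=C$ if $s=P$, $\x_2=D$ if $s=U$; under $B$, $s=U$ and $\x_2=C$ always. The age-$2$ best response (which, by the paper's definition of $\mu_2$, averages over \emph{all} videos reaching a context, not just those whose age-$1$ policy waited) predicts $U$ at $C$ (mass $1/2$ vs.\ $1/4$) and at $D$; given that, the age-$1$ best response predicts immediately at $A$ (conditional value $1/2+\lambda$ vs.\ $1/2$ for waiting), so the unique fixed point earns $V(\pi^*)=3/4+\lambda$, whereas the policy that waits at $A$, predicts $P$ at $C$ and $U$ at $D$, and predicts $U$ immediately at $B$ earns $V=1+\lambda/2>V(\pi^*)$. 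Hence $\pi^*\neq\pi^{opt}$ without the nestedness hypothesis, and your proof must state that hypothesis and invoke it at the tower-property step. To be fair, the paper's one-line optimality claim silently has the identical hole; under the nested-context reading of the model, both its argument and yours go through.
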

\begin{proof}
Given the context distribution $f(\x)$ which also implies the popularity evolution, the optimal age-$N$ policy can be determined in the first iteration. Since we break ties deterministicaly when rewards are the same, the policy is unique. Given this, in the second iteration, the optimal age-$(N-1)$ policy can be determined according to \eqref{best_response} and is also unique. By induction, the best response update determines the unique optimal age-$n$ policy after $N+1-n$ iterations. Therefore, the complete policy is found in $N$ iterations and this policy maximizes the overall prediction reward.
\end{proof}

Theorem 1 proves that we can compute the optimal prediction policy using a simple iterative algorithm as long as we have complete knowledge of the popularity evolution distribution. In practice, this information is unknown and extremely difficult to obtain, if not possible. One way to estimate this information is based on a training set. Since the context space is usually very large (which usually involves infinite number of values), a very large volume of training set is required to obtain a reasonably good estimation. Moreover, existing training sets may be biased and outdated as social media evolves. Hence, prediction policies developed using existing training sets may be highly inefficient~\cite{Sollich}. In the following section, we develop learning algorithms to learn the optimal policy in an online fashion, requiring no initial knowledge of the popularity evolution patterns.

\section{Learning the Optimal Forecasting Policy with Incomplete Information}
In this section, we develop a learning algorithm to determine the optimal prediction policy without any prior knowledge of the underlying context distribution $f(\x)$. In the considered scenario, videos arrive to the system in sequence\footnote{To simplify our analysis, we will assume that one video arrives at one time. Nevertheless, our framework can be easily extended to scenarios where multiple videos arrive at the same time.} and we will make popularity prediction based on past experiences by exploiting the similarity information of videos.

Since we have shown in the last section that we can determine the complete policy $\pi$ using a simple iterative algorithm, we now focus mainly on learning $\pi_n$ for one age by fixing the policies $\pi_{-n}$ for other ages. Importantly, we will provide not only asymptotic convergence results but also prediction performance bounds during the learning process.

\subsection{Learning Regret}
In this subsection, we define the performance metric of our learning algorithm. Let $\sigma_n$ be a learning algorithm of $\pi_n$ which takes action $\sigma^k_n(\x^k_n)$ at instance $k$. We will use learning regret to evaluate the performance of a learning algorithm. Since we focus on $\pi_n$, we will use simplified notations in this section by neglecting $\pi_{-n}$. However, keep in mind that the age-$n$ prediction reward depends on actions at all later ages $a_{n^+}$ besides $a_n$ when $a_n = \textrm{Wait}$. Let $\mu_n(\x_n|a_n)$ denote the expected reward when age-$n$ context information is $\x_n$ and the algorithm takes the action $a_n \in \mathcal{S}\cup \{\textrm{Wait}\}$.  We make a widely adopted assumption~\cite{Slivkins}~\cite{Dudik}~\cite{Langford} that the expected reward of an action is similar for similar contextual and situational information; we formalize this in terms of (uniform) Lipschitz condition.

\begin{assumption*}(Lipschitz)
For each $a_n \in \mathcal{S}\cup \{\textrm{Wait}\}$, there exists $L > 0, \alpha > 0$ such that for all $\x_n, \x'_n \in \mathcal{X}_n$, we have $|\mu(\x_n|a_n) - \mu(\x'_n|a_n)|\leq L\|\x_n, \x'_n\|^\alpha$.
\end{assumption*}

The optimal action given a context $\x_n$ is therefore, $a^*(\x_n) = \arg\max_{a_n} \mu_n(\x_n|a_n)$ (with ties broken deterministically) and the optimal expected reward is $\mu^*_n(\x_n) = \mu_n(\x_n|a^*_n)$. Let $\Delta = \max_{\x_n\in \mathcal{X}_n} \{\mu^*_n(\x_n) - \mu_n(\x_n|a_n\neq a^*_n)\}$ be the maximum reward difference between the optimal action and the non-optimal action over all context $\x_n \in \mathcal{X}_n$. Finally, we let $r_n(\x_n^k|\sigma^k_n)$ be the realized age-$n$ reward for video $k$ by using the learning algorithm $\sigma$. The expected regret by adopting a learning algorithm $\sigma_n$ is defined as
\begin{equation}\label{regret}
R_n(K) = \mathbb{E}\{\sum\limits_{k=1}^K  \mu^*_n(\x^k_n) - \sum\limits_{k=1}^K r_n(\x^k_n|\sigma^k_n)\}
\end{equation}

Our online learning algorithm will estimate the prediction rewards by selecting different actions and then choose the actions with best estimates based on past experience. One way to do this is to record the reward estimates without using the context/situational information. However, this could be very inefficient since for different contexts, the optimal actions can be very different. Another way is to maintain the reward estimates for each individual context $\x_n$ and select the action only based on these estimates. However, since the context space $\mathcal{X}_n$ can be very large, for a finite number $K$ of video instances, the number of videos with the same context $\x_n$ is very small. Hence it is difficult to select the best action with high confidence. Our learning algorithm will exploit the similarity information of contexts, partition the context space into smaller subspaces and learn the optimal action within each subspace. The key challenge is how and when to partition the subspace in an efficient way. Next, we propose an algorithm that adaptively partitions the context space according the arrival process of contexts.

\subsection{Online Popularity Prediction with Adaptive Partition}
In this subsection, we propose the online prediction algorithm with adaptive partition (Adaptive-Partition) that adaptively partitions the context space according to the context arrivals. This will be the key module of the Social-Forecast algorithm. For analysis simplicity, we normalize the context space to be $\mathcal{X}_n = [0,1]^d$. We call a $d$-dimensional hypercube which has sides of length $2^{-l}$ a level $l$ hypercube. Denote the partition of $\mathcal{X}_n$ generated by level $l$ hypercubes by $\mathcal{P}_l$. We have $|\mathcal{P}_l|=2^{ld}$. Let $\mathcal{P}:=\cup_{l=0}^\infty \mathcal{P}_l$ denote the set of all possible hypercubes. Note that $\mathcal{P}_0$ contains only a single hypercube which is $\mathcal{X}_n$ itself. For each instance arrival, the algorithm keeps a set of hypercubes that cover the context space which are mutually exclusive. We call these hypercubes {\it active} hypercubes, and denote the set of active hypercubes at instance $k$ by $\mathcal{A}_k$. Clearly, we have $\cup_{C\in \mathcal{A}_k} = \mathcal{X}_n$. Denote the active hypercube that contains $\x^k_n$ by $C_k$. Let $M_{C_k}(k)$ be the number of times context arrives to hypercube $C_{k}$ by instance $k$. Once activated, a level $l$ hypercube $C$ will stay active until the first instance $k$ such that $M_{C_k}(k) \geq  A2^{pl}$ where $p>0$ and $A>0$ are algorithm design parameters. When a hypercube $C_{k}$ of level $l$ becomes inactive, the hypercubes of level $l+1$ that constitute $C_{k}$, denoted by $\mathcal{P}_{l+1}(C_k)$, are then activated.

When a context $\x^k_n$ arrives, we first check to which active hypercube $C_k \in \mathcal{A}_k$ it belongs. Then we choose the action with the highest reward estimate $a_n = \arg\max\limits_a \bar{r}_{a,C_k}(k)$, where $\bar{r}_{a,C_k}(k)$ is the sample mean of the rewards collected from action $a$ in $C_k$ which is an activated hypercube at instance $k$. When the prediction reward is realized for instance $k$ (i.e. at the end of age $N$), we perform a {\it virtual update} for the reward estimates for all actions (see Figure \ref{virtualupdate}). The reason why we can perform such a virtual update for actions which are not selected is because the context transition over time is independent of our prediction actions and hence, the reward by choosing any action can still be computed even though it is not realized.

\begin{figure}
  \centering
  \includegraphics[scale=0.9]{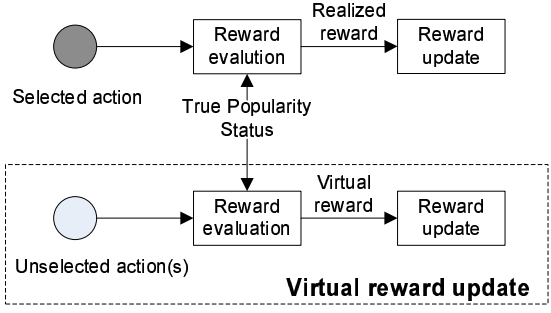}\\
  \caption{Illustration for virtual reward update in Adaptive Partition.}\label{virtualupdate}
\end{figure}

Algorithm \ref{PopPred-AP} provides a formal description for the Adaptive-Partition algorithm. Figure \ref{partition} illustrates the adaptive partition process of Adaptive-Partition algorithm. Next, we bound the regret by running the Adaptive-Partition algorithm.

\begin{algorithm}
\caption{Adaptive-Partition Algorithm}\label{PopPred-AP}
\begin{algorithmic}
\State Initialize $\mathcal{A}_1 = P_0$, $M_C(0) = 0$, $\bar{r}_{a,C}(0) = 0, \forall a, \forall C \in \mathcal{P}$.
\For {each video instance $k$}
\State Determine $C\in \mathcal{A}_k$ such that $\x^k_n \in C$.
\State Select $a_n = \arg\max\limits_a \bar{r}_{a,C}(k)$.
\State After the prediction reward is realized, update $\bar{r}_{a,C}(k+1)$ for all $a$.
\State Set $M_C(k)\leftarrow M_C(k-1) + 1$.
\If{ $M_C(k) \geq A2^{pl}$ }
\State Set $\mathcal{A}_{k+1} = (\mathcal{A}_k\backslash C)\cup \mathcal{P}_{l+1}(C)$ \EndIf
\EndFor
\end{algorithmic}
\end{algorithm}

\begin{figure*}[t]
  \centering
  \includegraphics[scale=1.1]{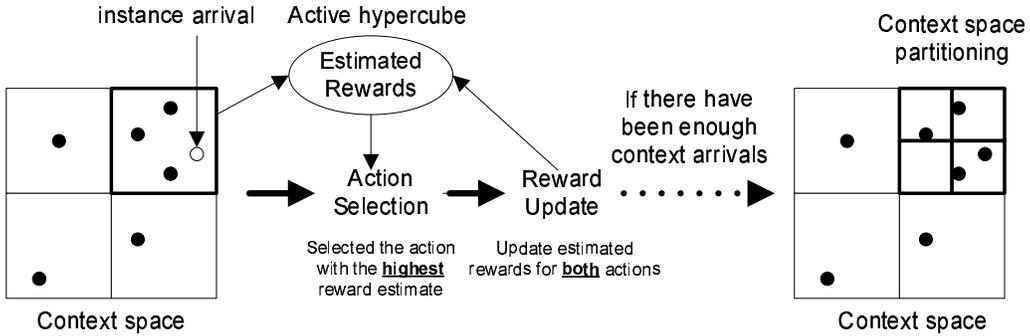}\\
  \caption{The context space partitioning of the Adaptive-Partition algorithm.}\label{partition}
\end{figure*}

In order to get the regret bound of the Adaptive-Partition algorithm, we need to consider how many hypercubes of each level is formed by the algorithm up to instance $K$. The number of such hypercubes explicitly depends on the context arrival process. Therefore, we investigate the regret for different context arrival scenarios.
\begin{definition*}
We call the context arrival process the {\bf the worst-case arrival process} if it is uniformly distributed inside the context space, with minimum distance between any two context samples being $K^{-1/d}$, and {\bf the best-case arrival process} if $\x^k \in C, \forall k$ for some level $\lceil (\log_2(K)/p\rceil +1$ hypercube $C$.
\end{definition*}

In Theorem 2, we determine the finite time, uniform regret bound for the Adaptive-Partition algorithm. The complete regret analysis and proofs can be found in the appendix.

\begin{theorem}\label{thmreg}
\begin{itemize}
  \item For the worst case arrival process, if $p = \frac{3\alpha + \sqrt{9\alpha^2+8\alpha d}}{2}$, then $R_n(K)=O(K^{\frac{d+\alpha/2 + \sqrt{9\alpha^2 + 8\alpha d}/2}{d+3\alpha/2 + \sqrt{9\alpha^2 + 8\alpha d}/2}})$.
  \item For the best case arrival process, if $p = 3\alpha$, then $R_n(K) = O(K^{2/3})$.
\end{itemize}
\end{theorem}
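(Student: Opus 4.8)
The plan is to bound $R_n(K)$ by splitting the per-instance loss $\mu_n^*(\x_n^k) - r_n(\x_n^k\mid\sigma_n^k)$ (in expectation) into an \emph{approximation} part coming from treating a whole hypercube as a single context, and an \emph{estimation} part coming from finite-sample errors in the reward estimates, and then to control each by counting how the Adaptive-Partition rule fills hypercubes of each level under the two arrival models.

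First I would fix the active level-$l$ hypercube $C$ containing $\x_n^k$, and let $a_C=\arg\max_a \mu_{a,C}$ maximize the cube-averaged reward $\mu_{a,C}=\mathbb{E}[r_n(\x_n\mid a)\mid \x_n\in C]$. Since every context in a level-$l$ hypercube lies within diameter $\sqrt d\,2^{-l}$, the Lipschitz Assumption gives
\[ \mu_n^*(\x_n^k) - \mu_n(\x_n^k\mid a_n^k) \le \big(\mu_{a_C,C}-\mu_{a_n^k,C}\big) + 2L d^{\alpha/2}2^{-l\alpha}, \]
where the last term is the approximation error and the bracket is the estimation loss from picking an action that is suboptimal \emph{for the cube}. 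The crucial simplification is the virtual update: because context transitions are independent of the prediction actions, after $m$ arrivals into $C$ every action has exactly $m$ samples, so the empirical means $\bar r_{a,C}$ concentrate at rate $m$ with no exploration penalty. By a Hoeffding bound the probability of selecting a cube-suboptimal action decays exponentially in $m$, and balancing a confidence radius against the residual ``bad-event'' contribution yields a per-hypercube estimation loss that is strictly sublinear in the number of samples the cube absorbs.

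Next I would convert these per-cube bounds into totals via hypercube counting, using the activation rule that a level-$l$ cube stays active until it has absorbed $A2^{pl}$ arrivals (this is the one place the design parameter $p$ enters). For the worst-case (uniform) process the minimum-separation condition forces every cube up to the finest active level $l_{\max}\approx\frac{1}{p+d}\log_2 K$ to be activated, giving $2^{ld}$ cubes at each level $l\le l_{\max}$, each carrying $\Theta(2^{pl})$ samples; for the best-case process all mass funnels down a single ancestor chain, leaving one active cube per level down to $l^{*}\approx\frac{1}{p}\log_2 K$. Summing the approximation loss $\sum_{\text{cubes}}(\text{samples})\cdot 2^{-l\alpha}$ and the estimation loss $\sum_{\text{cubes}}(\text{samples})^{\gamma}$ over levels then gives geometric series dominated by the deepest level, which I would rewrite as explicit powers of $K$.

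Finally I would optimize over $p$: the approximation exponent \emph{increases} with $p$ (a coarse partition is retained longer) while the estimation exponent \emph{decreases} with $p$ (each cube gathers more samples before splitting), so the dominant term is minimized where the two exponents meet. Carrying this out for the worst-case count gives the stated quadratic choice of $p$ and its exponent, while the best-case chain collapses to $p=3\alpha$ with $R_n(K)=O(K^{2/3})$. I expect the estimation term to be the main obstacle: one must choose the confidence radius so that the bad-event loss is of the same order as the good-event loss, rather than crudely charging $\Delta$ at every step, and then verify that the level-wise sums are genuinely top-level-dominated so the optimization over $p$ reproduces exactly the claimed rates. The worst-case bookkeeping under the separation condition is the second delicate point, as it governs precisely how the dimension $d$ enters both exponents and hence the optimal $p$.
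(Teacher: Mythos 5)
Your core mechanism is genuinely different from the paper's, and in fact tighter. The paper does not bound the estimation loss by a gap-weighted concentration sum over a cube's own sample count. Instead, it introduces an artificial split of time into \emph{virtual exploration} steps (those with $M_C(k)\le D(k)=k^z\log k$, $z=2\alpha/p$, where $k$ is the \emph{global} instance index) and virtual exploitation steps: during exploration it crudely charges the maximal loss $\Delta$ per step, and only in exploitation does it apply Chernoff--Hoeffding, with a confidence radius $H_k=k^{-z/2}$ tied to the global index so that error probabilities are $1/k^2$ and summable over $k$. This yields three terms: exploration $O\bigl(K^{\frac{d}{d+p}+\frac{2\alpha}{p}}\log K\bigr)$, suboptimal-selection-in-exploitation $O(1)$ per cube, and near-optimal (your ``approximation'') $O\bigl(K^{\frac{d+p-\alpha}{d+p}}\bigr)$; the stated quadratic is exactly the root of $p^2-3\alpha p-2\alpha d=0$ obtained by equating the first and third exponents. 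Your proposal eliminates the exploration term altogether, which is legitimate: with virtual updates every action has $m=M_C(k)$ samples, so the expected loss from a cube-suboptimal action with cube gap $\delta\gtrsim 2^{-l\alpha}$ is $\sum_{m\ge 1}\delta\, e^{-\Theta(m\delta^2)}=O(1/\delta)=O(2^{l\alpha})$ per cube, with no dependence on the global horizon.

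This, however, is exactly where your final step breaks: carrying out \emph{your} balancing does not give the stated $p$. With per-cube estimation loss $O(2^{l\alpha})$, your worst-case totals are $O\bigl(K^{\frac{d+\alpha}{d+p}}\bigr)$ (estimation) and $O\bigl(K^{\frac{d+p-\alpha}{d+p}}\bigr)$ (approximation), which meet at $p=2\alpha$ with exponent $\frac{d+\alpha}{d+2\alpha}$ --- a different $p$ and a strictly better rate than claimed; similarly the best case balances at $p=2\alpha$ with $O(K^{1/2})$, not at $p=3\alpha$ with $O(K^{2/3})$. The $d$-dependent quadratic cannot emerge from your decomposition at all: it is generated by the paper's per-cube exploration cost $K^{2\alpha/p}\log K$, which scales with the global $K$ rather than with the cube's own $A2^{pl}$ samples, so no level-independent exponent $\gamma$ in your $\sum_{\text{cubes}}(\text{samples})^{\gamma}$ accounting reproduces it. The theorem as stated is still provable inside your framework, because the statement \emph{fixes} $p$: at the prescribed worst-case $p$ (which exceeds $2\alpha$) and at $p=3\alpha$ in the best case, your approximation term alone is $\Theta\bigl(K^{\frac{d+p-\alpha}{d+p}}\bigr)$, which matches the claimed exponents exactly, while your estimation term is of lower order. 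But you must argue it that way; as written, your last paragraph asserts that the two exponents meet at the stated $p$, which is false under your own accounting and would leave you unable to ``reproduce exactly the claimed rates.''
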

\begin{proof}
See Appendix.
\end{proof}
The regret bounds proved in Theorem 2 are sublinear in $K$ which guarantee convergence in terms of the average reward, i.e. $\lim_{K\to \infty} \mathbb{E}[R_n(K)]/K = 0$. Thus our online prediction algorithm makes the optimal predictions as sufficiently many videos instances have been seen. More importantly, the regret bound tells how much reward would be lost by running our learning algorithm for any finite number $K$ of videos arrivals. Hence, it provides a rigorous characterization on the learning speed of the algorithm.

\subsection{Learning the Complete Policy $\pi$}
In the previous subsection, we proposed the Adaptive-Partition algorithm to learn the optimal policy $\pi^*_n(\pi_{-n})$ by fixing $\pi_{-n}$. We now present in Algorithm \ref{complete} the Social-Forecast algorithm that learns the complete policy.

\begin{algorithm}
\caption{Social-Forecast Algorithm}\label{complete}
\begin{algorithmic}
\For {each video instance $k$}
\For {each age $n = 1$ to $N$}
\State    Get context information $\x^k_n$.
\State    Select $a^k_n$ according to Adaptive-Partition.
\State    Perform context partition using Adaptive-Partition.
\EndFor
\State    Popularity status $s^k$ is realized.
\For {each age $n = 1$ to $N$}
\State    Compute the age-dependent reward $r^k_n$.
\State    Update reward estimates using Adaptive-Partition.
\EndFor
\EndFor
\end{algorithmic}
\end{algorithm}

Social-Forecast learns all age-dependent policies $\pi_n, \forall n$ simultaneously. For a given age $n$, since $\pi_{-n}$ is not fixed to be the optimal policy $\pi^{opt}_{-n}$ during the learning process, the learned policy $\pi_{n}$ may not be the global optimal $\pi^{opt}_n$. However, as we have shown in Section IV, in order to determine $\pi^{opt}_n$, only the policies for ages greater than $n$, i.e.  $\pi^{opt}_{n^+}$ need to be determined. Thus even though we are learning $\pi_n, \forall n$ simultaneously, the learning problem of $\pi_N$ is not affected and hence, $\pi^{opt}_N$ will be learned with high probability after a sufficient number of video arrivals. Once $\pi^{opt}_N$ is learned with high probability, $\pi^{opt}_{N-1}$ can also be learned with high probability after an additional number of video arrivals. By this induction, such a simultaneous learning algorithm can still learn the global optimal complete policy with high probability. In the experiments we will show the performance of this algorithm in practice.

\subsection{Complexity of Social-Forecast}
For each age of one video instance arrival, Social-Forecast needs to do one comparison operation and one update operation on the estimated reward of each forecast action. It also needs to update the counting of context arrivals to the current context subspace and perform context space partitioning if necessary. In sum, the time complexity has the order $O(|\mathcal{S}|N)$ for each video instance and $O(|\mathcal{S}|NK)$ for $K$ video arrivals. Since the maximum age $N$ of interest and the popularity status space is given, the time complexity is linear in the number of video arrivals $K$. The Social-Forecast algorithm maintains for each \textit{active} context subspace reward estimates of all forecast actions. Each partitioning creates $2^d - 1$ more \textit{active} context subspaces and the number of partitioning is at most $K/A$. Thus the space complexity for $K$ video arrivals is at most $O(2^d N K/A)$. Since the context space dimension $d$ and the algorithm parameter $A$ are given and fixed, the space complexity is at most linear in the number of video arrivals $K$.

\section{Experiments}
In this section we evaluate the performance of the proposed Social-Forecast algorithm. We will first examine the unique propagation characteristics of videos shared through social media. Then we will use these as the context (and situational) information for our proposed online prediction algorithm. Our experiments are based on the dataset that tracks the propagation process of videos shared on RenRen (\texttt{www.renren.com}), which is one of the largest Facebook-like online social networks in China. We set one period to be 2 hours and are interested in predicting the video popularity by 100 periods (8.3 days) after its initiation. In most of our experiments, we will consider a binary popularity status space \{Popular, Unpopular\} where ``Popular'' is defined for videos whose total number of views exceeds 10000. However, we also conduct experiments on a more refined popularity status space in Section VI(F).

The prediction reward function that we use is $U(a^k_n, s^k, n) = \theta(a^k_n, s^k) + \lambda \psi(n)$. For the case of binary popularity status space, the accuracy reward function $\theta$ is chosen as follows
\begin{equation}
\theta(a^k_n, s^k) =\left\{
\begin{array}{ll}
1, &\textrm{if}~a^k_n = s^k = \textrm{Unpopular}\\
w, &\textrm{if}~a^k_n = s^k = \textrm{Popular}\\
0, &\textrm{if}~a^k_n \neq s^k
\end{array}\right.
\end{equation}
where $w > 0$ is fixed reward for correctly predicting popular videos and hence controls the relative importance of true positive and true negative. The timeliness reward function $\psi$ is simply taken as $\psi(n) = N - n$. Recall that the prediction reward function is a combination of the two and we use $\lambda > 0$ to trade-off accuracy and timeliness. In the experiments, we will vary both $w$ and $\lambda$ to investigate their impacts on the prediction performance. Note that we use these specific reward functions in this experiment but other reward functions can easily be adopted in our algorithm.

\subsection{Video propagation characteristics}
A RenRen user can post a link to a video taken by him/herself or from an external video sharing website such as Youtube. The user, referred to as an {\it initiators}~\cite{Li}, then starts the sharing process. The friends of these initiators can find this video in their ``News Feed''. Some of them may watch this video and some may re-share the video to their own friends. We call the users who watched the shared video {\it viewers} and those who re-shared the video {\it spreaders}. Since spreaders generally watched the video before re-shared it, most of them are also viewers. In the experiment, we will use two characteristics of videos promoted by social media as the context (and situational) information for our algorithm. The first is the initiator's {\it Branching Factor (BrF)}, which is the number of viewers who directly follow the initiator. The second is the {\it Share Rate (ShR)}, which is the ratio of the viewers that re-share the video after watching it. Figure \ref{context} shows the evolution of the number of views, the BrF and the ShR for three representative videos over 100 periods. Among these three videos, video 1 is an unpopular video while video 2 and video 3 are popular videos, which become popular at age 37 and age 51, respectively. We analyze the differences between popular and unpopular videos as follows.
\begin{itemize}
  \item {\it Video 1 vs Video 2}. The ShRs of both videos are similar. The BrF of video 2 is much larger than that of video 1. This indicates that video 1 may be initiated by users with a large number of friends, e.g. celebrities and pubic accounts. Thus, videos with larger BrF potentially will achieve popularity in the future.
  \item {\it Video 1 vs Video 3}. The BrFs of both videos are low (at least before video 3 becomes popular). Video 3 has a much larger ShR than video 1. This indicates that video 3 is being shared with high probability and thus, videos with larger ShR will potentially become popular in the future.
\end{itemize}

The above analysis shows that BrF and ShR are good situational metrics for videos promoted by social media. Therefore we will use these two metrics in addition to the total and per-period numbers of views as the context information for our proposed online prediction algorithms. Nevertheless, our algorithms are general enough to take other situational metrics to further improve the prediction performance, e.g. the type of the videos, the number of spreaders, the propagation topology etc.

\begin{figure}
  \centering
  \includegraphics[scale=0.6]{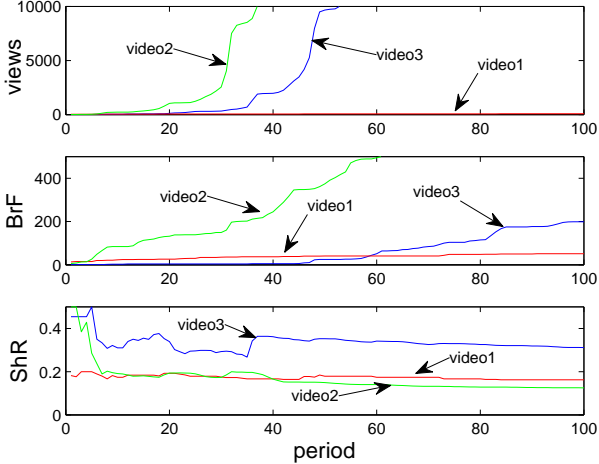}\\
  \caption{Popularity evolution of 3 representative videos.}\label{context}
\end{figure}

\subsection{Benchmarks}
We will compare the performance of our online prediction algorithm with four benchmarks.
\begin{itemize}
  \item {\bf All Unpopular (AU)}. The first benchmark is a naive scheme which predicts that all videos are not popular at age 1. This is equivalent to $a^k_1 = \textrm{Unpopular},\forall k$.
  \item {\bf All Popular (AP)}. The second benchmark is another naive scheme which makes the prediction at age 1 that the video will become popular in the future. This is equivalent to take the action $a^k_1 = \textrm{Popular},\forall k$.
  \item {\bf View-based Prediction (VP)}. The third benchmark is a conventional view-based prediction algorithm based on~\cite{Szabo}. It uses training sets to establish log-linear correlations between the early number of views and the later number of views. Since this algorithm does not explicitly consider timeliness in prediction, we will investigate different versions that make predictions at different ages. Intuitively, the time when the prediction is made has oppositive affects on the prediction accuracy and timeliness. A later prediction predicts the video with higher confidence but is less timely.
  \item {\bf Perfect Prediction}. The last benchmark provides the best prediction results: for each unpopular video, it predicts unpopular at age 1; for each popular video, it predicts popular at age 1. Since this benchmark generates the highest possible prediction reward, we normalize the rewards achieved by other algorithms with respect to this reward.
\end{itemize}

\subsection{Performance comparison}
In this subsection, we compare the prediction performance of our proposed algorithm with the benchmarks. This set of experiments are carried out on a set of 10000 video links, among which 10\% are popular videos. The videos were initiated in sequence and thus, initially we do not have any knowledge of the videos or video popularity evolution patterns. For the VP algorithm, we use three versions, labeled as VP-25, VP-50, VP-75, in which the prediction is made at age 25, 50, 75, respectively.

Table \ref{comparisonW} records the normalized prediction rewards obtained by our proposed algorithm and the benchmarks for $\lambda = 0.010$ and $w = 5, 10, 15$. The trade-off parameter $\lambda$ for accuracy and timeliness is set to be small because the lifetime $N$ is large. The Social-Forecast algorithm is labeled by SF.
\begin{itemize}
  \item For AU and AP, even though their accuracy is expected to be bad, they will obtain full timeliness rewards because they make the predictions at the first age for each video. However, since their prediction accuracy is low, their overall prediction rewards are the lowest among all algorithms. The reward achieved by AU is decreasing in $w$ and the reward achieved by AP is increasing in $w$. This is because a larger $w$ assigns higher importance to correct prediction for popular videos and the fact that AP predicts all popular videos correctly and AU predicts all unpopular videos correctly.
  \item VP algorithms achieve better prediction rewards than AU and AP. As can be seen from the table, an early prediction generates higher rewards because a large portion of the reward is derived from the timeliness of the prediction. The reward achieved by VP-25 is decreasing in $w$ while those achieved by VP-50 and VP-75 are increasing in $w$. This implies that VP-25 has a better performance on predicting unpopular videos than predicting popular videos.
  \item The proposed algorithm Social-Forecast generates significantly higher prediction rewards than all benchmark algorithms. Its performance is not sensitive to the specific value of $w$ which implies that it is able to predict both popular and unpopular videos very accurately and in a timely manner.
\end{itemize}

\begin{table}
\centering
\caption{Comparison of normalized prediction reward with varying $w$}\label{comparisonW}
\begin{tabular}{|c|c|c|c|c|c|c|}
  \hline
  ~ & AU & AP & VP-25 & VP-50 & VP-75 & SF \\
  \hline
  $w = 5$ & 0.795 & 0.622 & 0.831 & 0.763 & 0.643 & 0.986 \\
  \hline
  $w = 10$ & 0.663 & 0.685 & 0.823 & 0.803 & 0.671 & 0.983 \\
  \hline
  $w = 15$ & 0.549 & 0.740 & 0.814 & 0.837 & 0.691 & 0.981 \\
  \hline
\end{tabular}
\end{table}

Next, we fix $w$ and vary $\lambda$. Table \ref{comparisond} records the normalized prediction rewards obtained by our proposed algorithm and the benchmarks for $W = 10$ and $\lambda = 0.005,0.010,0.015$. Several points are worth discussing:
\begin{itemize}
  \item The rewards obtained by both and AU and AP are increasing in $\lambda$. This is because both benchmarks derive full reward from the timeliness prediction since they make prediction at the first age for all videos.
  \item The rewards obtained by all three versions of VP are decreasing in $\lambda$. This suggests the rewards are mainly derived from prediction accuracy but the VP algorithms are not able to make the prediction in a timely manner.
  \item Our proposed Social-Forecast algorithm significantly outperforms all other benchmark algorithms and achieve close-to-optimal rewards for all values of $\lambda$.
\end{itemize}

\begin{table}
\centering
\caption{Comparison of normalized prediction reward with varying $\lambda$}\label{comparisond}
\begin{tabular}{|c|c|c|c|c|c|c|}
  \hline
  ~ & AU & AP & VP-25 & VP-50 & VP-75 & SF \\
  \hline
  $\lambda = 0.005$ & 0.601 & 0.612 & 0.835 & 0.862 & 0.757 & 0.980 \\
  \hline
  $\lambda = 0.010$ & 0.663 & 0.685 & 0.823 & 0.803 & 0.671 & 0.983 \\
  \hline
  $\lambda = 0.015$ & 0.701 & 0.737 & 0.816 & 0.762 & 0.613 & 0.983 \\
  \hline
\end{tabular}
\end{table}

We also investigate the achieved predication accuracy in terms of true positive rate and true negative rate. We define the true positive rate as the ratio of correctly predicted videos among all popular videos and the true negative rate as the ratio of correctly predicted videos among all unpopular videos. Table \ref{TPTN} illustrates the true positive rates and true negative rates achieved by different algorithms. As can be seen from the table, in general prediction at a later age for the VP algorithms improves the accuracy. However, it is not always the case since the true negative rate achieved by VP-75 is low. This suggests that the correlation used by VP-75 for unpopular videos does not accurately reflect the true popularity evolution trend. Instead, our proposed Social-Forecast is able to achieve both a high true positive rate and a high true negative rate, at the same time predicting in a timely manner.

\begin{table}
\centering
\caption{True Positive and True Negative.}\label{comparison}
\begin{tabular}{|c|c|c|c|c|c|c|}
  \hline
  ~ & AU & AP & VP-25 & VP-50 & VP-75 & SF \\
  \hline
  True Positive & 0 & 1 & 0.918 & 0.917 & 0.995 & 0.983 \\
  \hline
  True Negative & 1 & 0 & 0.804 & 0.994 & 0.789 & 0.972 \\
  \hline
\end{tabular}\label{TPTN}
\end{table}

\subsection{Learning performance}
Our proposed Social-Forecast algorithm is an online algorithm and does not require any prior knowledge of the video popularity evolution patterns. Hence, it is important to investigate the prediction performance during the learning process. Our analytic results have already provided sublinear bounds on the prediction performance for any given number of video instances which guarantee the convergence to the optimal prediction policy. Now, we show how much prediction reward that we can achieve during the learning process in experiments. Figure \ref{learning} shows the normalized prediction reward of Social-Forecast as the number of videos instances increases.  As more video instances arrive, our algorithm learns better the optimal prediction policy and hence, the prediction reward improves with the number of video instances. In particular, the proposed prediction algorithm is able to achieve more than 90\% of the best possible reward even with a relatively small number of video instances.

\begin{figure}
  \centering
  \includegraphics[scale=0.6]{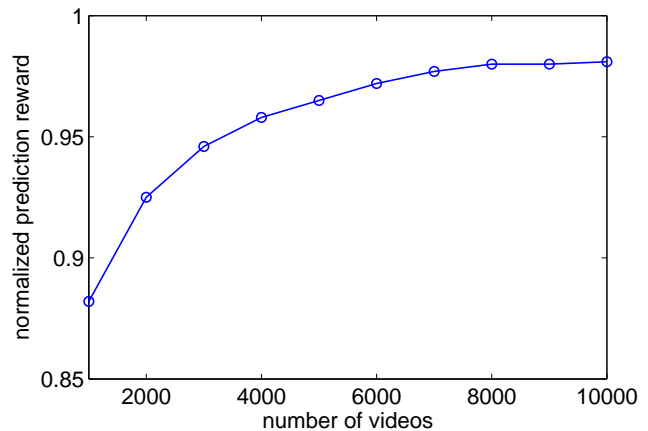}\\
  \caption{Prediction performance during the learning process.}\label{learning}
\end{figure}

\subsection{Choices of lifetime $N$}
So far in our analysis, we treated the prediction reference age $N$ as fixed. In practice, this is a parameter that can be set by the content providers, the advertisers and the web hosts depending on their specific requirements. In this experiment, we show the performance of our algorithm for different choices of $N$. Table \ref{comparisonN} provides the prediction rewards obtained by different algorithms for $N = 60, 70, 80, 90, 100$ when $d = 0.010$ and $w = 10$. In all experiments, the proposed algorithm achieves significant higher prediction rewards than the benchmarks. This shows that our methodology and associated algorithm is general and applicable for problems with different requirements.

\begin{table}
\centering
\caption{Impact of reference age $N$}\label{comparisonN}
\begin{tabular}{|c|c|c|c|c|}
  \hline
  ~ & AU & AP & VP-25 &  SF \\
  \hline
  $N = 60$ & 0.600 & 0.569 & 0.722 & 0.920  \\
  \hline
  $N = 70$ & 0.612 & 0.655 & 0.801 & 0.992  \\
  \hline
  $N = 80$ & 0.645 & 0.657 & 0.814 & 0.989  \\
  \hline
  $N = 90$ & 0.642 & 0.678 & 0.816 & 0.986  \\
  \hline
  $N = 100$ & 0.663 & 0.685 & 0.823 & 0.983  \\
  \hline
\end{tabular}
\end{table}

\subsection{More refined popularity prediction}
In the previous experiments, we considered a binary popularity status space. Nevertheless, our proposed popularity prediction methodology and associated algorithm can also be applied to predict popularity in a more refined space. In this experiment, we consider a refined popularity status space \{High Popularity, Medium Popularity, Low Popularity\} where ``High Popularity'' is defined for videos with more than 10000 views, ``Medium Popularity'' for videos with views between 2000 and 10000, and ``Low Popularity'' for videos with views below 2000. The portions of High, Medium and Low videos are 10\%, 30\% and 60\%, respectively. We set rewards for correctly predicting High, Medium, Low videos to be 10, 5 and 1, respectively. The proposed Social-Forecast algorithm is compared against the view-based algorithm VP performed at different prediction ages. Table \ref{refinement} illustrates the normalized rewards obtained by different algorithms for $\lambda = 0.005, 0.010, 0.015$. As can been seen from the table, the rewards obtained by all algorithms decrease compared with the binary popularity status case since prediction becomes more difficult. However, the performance improvement of Social-Forecast against VP becomes even larger. This suggests that our algorithm, which explicitly considers the situational/contextual information, is able to achieve a higher performance gain against view-based approaches for more refined popularity prediction.

\begin{table}
\centering
\caption{Performance comparison for refined popularity space.}\label{refinement}
\begin{tabular}{|c|c|c|c|c|}
  \hline
  ~ & VP-25 & VP-50 & VP-75 & SF \\
  \hline
  $\lambda = 0.005$ & 0.487 & 0.601 & 0.627 & 0.938 \\
  \hline
  $\lambda = 0.010$ & 0.493 & 0.580 & 0.578 & 0.928 \\
  \hline
  $\lambda = 0.015$ & 0.492 & 0.558 & 0.545 & 0.920 \\
  \hline
\end{tabular}
\end{table}

\section{Conclusions and Future Work}
In this paper, we have proposed a novel, systematic and highly-efficient
online popularity forecasting algorithm for videos promoted by social media. We have shown that by incorporating situational and contextual information, significantly better prediction performance can be achieved than existing approaches which disregard this information and only consider the number of times that videos have been viewed so far. The proposed Social-Forecast algorithm does not require prior knowledge of popularity evolution or a training set and hence can operate easily and successfully in online, dynamically-changing environments such as social media. We have systematically proven sublinear regret bounds on the performance loss incurred by our algorithm due to online learning. Thus Social-Forecast guarantees both short-term performance as well as its asymptotic convergence to the optimal performance in the long term.

This paper considered a single learner who observes the propagation patterns of videos promoted by one social media. One important future work direction is to extend to scenarios where there are multiple distributed learners (e.g. multiple advertisers, content producers and web hosts) who have access to multiple different social medias or different sections of one social media. In such scenarios, significant improvement is expected by enabling cooperative learning among the distributed learners~\cite{Tekin13}. The challenges in these scenarios are how to design efficient cooperative learning algorithms with low communication complexity~\cite{Xu13} and, when the distributed learners are self-interested and have conflicting goals, how to incentivize them to participate in the cooperative learning process using, e.g. rating mechanisms~\cite{Xu14}~\cite{XuNetecon}. Finally, while this paper has studied the specific
problem of online prediction of video popularity based on contextual
and situational information, our methodology and associated algorithm
can be easily adapted to predict other trends in social media (such
as identifying key influencers in social networks, the potential for
becoming viral of contents or tweets, identifying popular or relevant
content, providing recommendations for social TV etc.).

\appendix
In this appendix, we analyze the learning regret of the Adaptive-Partition algorithm. To facilitate the analysis, we artificially create two learning steps in the algorithms: for each instance $k$, it belongs to either a {\it virtual exploration} step or a {\it virtual exploitation} step.  Let $M_C(k)$ be the number of context arrivals in $C$ by video instance $k$. Given a context $\x^k_n \in C$, which step the instance $k$ belongs to depends on $M_C(k)$ and a deterministic function $D(k)$. If $M_C(k) \leq D(k)$, then it is in a virtual exploration step; otherwise, it is in a virtual exploitation step. Notice that these steps are only used in the analysis; in the implementation of the algorithm, these different steps do not exist and are not needed.

We introduce some notations here. Let $\mathcal{E}_{a,C}(k)$ be the set of rewards collected from action $a$ by instance $k$ for hypercube $C$. For each hypercube $C$ let $a^*(C)$ be the action which is optimal for the center context of that hypercube, and let $\bar{\mu}_{a,C}:=\sup_{\x\in C} \mu(\x|a)$ and $\underline{\mu}_{a,C} :=\inf_{\x\in C} \mu(\x|a)$. For a level $l$ hypercube $C$, the set of suboptimal action is given by
\begin{equation}
\mathcal{L}_{C,l,B}:=\{a:\underline{\mu}_{a^*,C} - \bar{\mu}_{a,C} > BL d^{\alpha/2} 2^{-l\alpha}\}
\end{equation}
where $B>0$ is a constant.

The regret can be written as a sum of three components:
\begin{equation}
R(K) = \mathbb{E}[R_e(K)] + \mathbb{E}[R_s(K)] + \mathbb{E}[R_n(K)]
\end{equation}
where $R_e(K)$ is the regret due to virtual exploration steps by instance $K$, $R_s(K)$ is the regret due to sub-optimal action selection in virtual exploitation steps by instance $K$ and $R_n(K)$ is the regret due to near-optimal action selections in virtual exploitation steps by instance $K$. The following series of lemmas bound each of these terms separately.

We start with a simple lemma which gives an upper bound on the highest level hypercube that is active at any instance $k$.

\begin{lemma}
All the active hypercubes $\mathcal{A}_k$ at instance $k$ have at most a level of $(\log_2k)/p+1$.
\end{lemma}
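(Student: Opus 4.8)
The plan is to argue directly from the activation and deactivation rule of the Adaptive-Partition algorithm, by tracing how a hypercube of a given level can come into existence and bounding the number of context arrivals this requires. The bound then falls out of a single logarithmic manipulation.

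First I would fix an arbitrary active hypercube $C \in \mathcal{A}_k$ of level $l$ and reduce to the case $l \geq 1$, since for $l = 0$ the claim is immediate: $(\log_2 k)/p + 1 \geq 1 > 0$ for every $k \geq 1$, and indeed $\mathcal{A}_1 = \mathcal{P}_0$ is the level-$0$ hypercube. For $l \geq 1$, the key structural observation is that $C$ is not present from the start; by the partitioning rule it is activated only at the instant its level-$(l-1)$ parent $C'$ is deactivated, i.e.\ $C \in \mathcal{P}_l(C')$ and $C'$ leaves the active set. Thus the existence of $C$ in $\mathcal{A}_k$ certifies that $C'$ must already have been deactivated at some earlier instance.

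Second I would quantify that deactivation. The parent $C'$ is removed at the first instance $\tilde{k}$ for which $M_{C'}(\tilde{k}) \geq A 2^{p(l-1)}$, and by the update step $\mathcal{A}_{\tilde{k}+1} = (\mathcal{A}_{\tilde{k}} \setminus C') \cup \mathcal{P}_l(C')$ the child $C$ only enters the active set at instance $\tilde{k}+1$; for $C \in \mathcal{A}_k$ we therefore need $\tilde{k} + 1 \leq k$. The central ingredient is then the trivial counting bound $M_{C'}(\tilde{k}) \leq \tilde{k}$: the number of context arrivals recorded in any \emph{single} hypercube cannot exceed the total number of instances processed so far, since each instance contributes at most one arrival to any region. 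Chaining these gives $A 2^{p(l-1)} \leq M_{C'}(\tilde{k}) \leq \tilde{k} \leq k - 1 < k$. Finally, invoking $A \geq 1$ (the regime of interest for the design parameter) and taking base-$2$ logarithms of $2^{p(l-1)} \leq k/A \leq k$ yields $p(l-1) \leq \log_2 k$, hence $l \leq (\log_2 k)/p + 1$. As $C$ was an arbitrary element of $\mathcal{A}_k$, the bound holds uniformly over the active set.

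I expect the only delicate point to be making the activation bookkeeping precise: correctly identifying that a level-$l$ hypercube is gated by its level-$(l-1)$ ancestor having accumulated the threshold $A 2^{p(l-1)}$ (rather than $A 2^{pl}$), and confirming through the update step that every one of those arrivals is indexed at or before instance $k$. Once the counting inequality $M_{C'}(\tilde{k}) \leq \tilde{k} \leq k$ is justified, the remainder is a one-line logarithm estimate, so the indexing/counting step is where the care is needed.
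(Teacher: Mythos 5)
Your proof is correct, and it rests on the same underlying mechanism as the paper's — an active high-level hypercube certifies that a deactivation threshold of size $A2^{p\cdot(\text{level})}$ was met at an earlier instance, and the level bound follows by taking base-$2$ logarithms — but your decomposition is simpler. The paper reasons over the \emph{entire ancestor chain}: for the highest active level $l+1$, every ancestor at level $j=1,\ldots,l$ must have accumulated $A2^{pj}$ arrivals during disjoint stretches of time, so $\sum_{j=1}^{l} A2^{pj} < k$, after which the geometric sum is bounded below by its dominant term. You bypass the chain entirely and use only the immediate parent $C'$: its deactivation at instance $\tilde{k}$ forces $A2^{p(l-1)} \leq M_{C'}(\tilde{k}) \leq \tilde{k} < k$, which already yields $l \leq (\log_2 k)/p + 1$. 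Since the paper discards all but the top term of its geometric sum anyway, your single-generation argument loses nothing and is arguably cleaner; the paper's chain-sum would only matter if one wanted the marginally sharper bound involving $\log_2\bigl(k(2^p-1)/(A2^p)\bigr)$. One shared caveat: both arguments need $A \geq 1$ (or must retain $\log_2(k/A)$ rather than $\log_2 k$) for the final implication; the paper handles this implicitly with its ``$k/A>1$'' remark, whereas you state the assumption explicitly, which is the more careful reading of the design parameter $A>0$.
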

\begin{proof}
Let $l+1$ be the level of the highest level active hypercube. We must have $\sum\limits_{j=1}^l A 2^{pj} < k$, otherwise the highest level active hypercube will be less than $l+1$. We have for $k/A > 1$,
\begin{align}
A\frac{2^{p(l+1)-1}}{2^p-1} < k \Rightarrow 2^{pi} < \frac{k}{A} \Rightarrow i < \frac{\log_2(k)}{p}
\end{align}
\end{proof}

The next three lemmas bound the regrets for any level $l$ hypercube. .

\begin{lemma}
If $D(k)= k^z \log k$. Then, for any level $l$ hypercube the regret due to virtual explorations by instance $k$ is bounded above by $\Delta(k^z \log k + 1)$.
\end{lemma}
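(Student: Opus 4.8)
The plan is to treat this as a direct counting argument. By the definition of $\Delta$ as the maximum reward gap between the optimal action and any suboptimal action over all contexts in $\mathcal{X}_n$, every video instance falling into the virtual exploration step contributes an expected regret of at most $\Delta$. Hence it suffices to count how many exploration arrivals a fixed level $l$ hypercube $C$ can accumulate by instance $k$, and then multiply that count by $\Delta$.

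First I would fix the hypercube $C$ and index its arrivals chronologically as $j_1 < j_2 < \cdots$, so that the $i$-th arrival to $C$ occurs at instance $j_i$ and satisfies $M_C(j_i) = i$. By the definition of the exploration/exploitation split, this arrival lies in a virtual exploration step precisely when $M_C(j_i) \leq D(j_i)$, that is, when $i \leq D(j_i)$. Next I would exploit the monotonicity of $D(k) = k^z \log k$: since every such arrival occurs at some instance $j_i \leq k$ and $D$ is nondecreasing, we have $D(j_i) \leq D(k)$. Therefore any exploration arrival must satisfy $i \leq D(j_i) \leq D(k) = k^z \log k$, so the number of exploration arrivals to $C$ by instance $k$ is at most the number of positive integers $i$ with $i \leq k^z \log k$, which is bounded above by $k^z \log k + 1$.

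Finally I would combine the two pieces: with at most $k^z \log k + 1$ exploration arrivals to $C$, each contributing expected regret at most $\Delta$, the total regret due to virtual explorations for any level $l$ hypercube is bounded above by $\Delta(k^z \log k + 1)$, as claimed. I would emphasize that this bound is uniform over the level $l$, since the counting argument never used the side length of $C$ except implicitly through which arrivals land inside it.

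There is no deep obstacle here; this is a bookkeeping lemma, and the genuinely delicate steps occur in the later lemmas that bound the suboptimal- and near-optimal-selection regrets. The only point requiring care is the indexing convention for $M_C$ relative to the exploration test (whether the running count includes the current arrival), which is exactly what determines whether the clean bound reads $\lfloor D(k)\rfloor$ or $D(k)+1$; the stated ``$+1$'' absorbs precisely this off-by-one slack, so I would make sure to state the convention explicitly rather than silently assume it.
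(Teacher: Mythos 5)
Your proof is correct and follows essentially the same route as the paper's: bound the number of virtual exploration arrivals to a fixed hypercube by $D(k)+1 = k^z \log k + 1$ (the paper writes $\lceil k^z\log k\rceil$) and multiply by the per-step regret bound $\Delta$. Your explicit treatment of the monotonicity of $D$ and of the indexing convention for $M_C$ merely spells out details the paper's terse proof leaves implicit.
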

\begin{proof}
Since the instance $k$ belongs to a virtual exploration step if and only if $M_C(k) \leq D(k)$, up to instance $K$, there can be at most $\lceil k^z\log k\rceil$ virtual exploration steps for one hypercube. Therefore, the regret is bounded by $\Delta(k^z \log k + 1)$.
\end{proof}

\begin{lemma}
Let $B = \frac{2}{Ld^{\alpha/2}2^{-\alpha}} + 2$. If $p > 0, 2\alpha/p \leq z < 1$, $D(k) = k^z \log k$, then for any level $l$ hypercube $C$, the regret due to choosing suboptimal actions in virtual exploitation steps, i.e. $\mathbb{E}[R_{C,s}(K)]$, is bounded above by $2\beta_2$.
\end{lemma}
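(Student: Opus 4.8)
The plan is to show that, once a level-$l$ cube $C$ has entered the virtual exploitation regime, the probability that Adaptive-Partition selects a genuinely suboptimal action $a\in\mathcal{L}_{C,l,B}$ at any instance $k$ is so small that, summed over all $k$, it contributes only a constant amount of regret. The structural fact I would exploit is the virtual update: every arrival to $C$ refreshes the estimate $\bar{r}_{a,C}$ of \emph{every} action, so after $M_C(k)$ arrivals each action carries exactly $M_C(k)$ reward samples. Hence in an exploitation step, where by definition $M_C(k) > D(k) = k^z\log k$, every action—including the optimal one $a^*(C)$—is estimated from more than $k^z\log k$ independent, bounded reward samples, and the per-step regret of any single misselection is at most $\Delta$.

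First I would set $\delta := BLd^{\alpha/2}2^{-l\alpha}$, so that membership in $\mathcal{L}_{C,l,B}$ reads $\underline{\mu}_{a^*,C} - \bar{\mu}_{a,C} > \delta$. Since the expectation of each sample mean lies between the infimum and supremum of the true mean over $C$, we have $\mathbb{E}[\bar{r}_{a^*,C}(k)] \geq \underline{\mu}_{a^*,C}$ and $\mathbb{E}[\bar{r}_{a,C}(k)] \leq \bar{\mu}_{a,C}$, so the two estimates have expectations separated by more than $\delta$. The algorithm picks the suboptimal $a$ only if $\bar{r}_{a,C}(k) \geq \bar{r}_{a^*,C}(k)$, which forces at least one of the two sample means to deviate from its expectation by more than $\delta/2$. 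A union bound over these two events followed by Hoeffding's inequality then gives a per-instance misselection probability of the form $2\exp(-M_C(k)\delta^2/2)$.

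The main obstacle—and the step that forces the stated hypotheses—is that $\delta^2 \propto 2^{-2l\alpha}$ shrinks for deep cubes, so the exponent could degrade until the tail is no longer summable. I would control this by coupling the level $l$ to the instance index $k$ through the preceding lemma on the maximal active level: if $C$ is active at instance $k$ then $l \leq (\log_2 k)/p + 1$, whence $2^{-2l\alpha} \geq 2^{-2\alpha} k^{-2\alpha/p}$. Substituting $M_C(k) > k^z\log k$ turns the exponent into at least $\tfrac{1}{2}B^2L^2 d^{\alpha} 2^{-2\alpha}\, k^{\,z-2\alpha/p}\log k$. It is exactly here that the condition $z \geq 2\alpha/p$ earns its keep: it makes $k^{z-2\alpha/p}\geq 1$, so the exponent is at least $\tfrac{1}{2}B^2L^2 d^{\alpha} 2^{-2\alpha}\log k$ uniformly over all active levels, and the misselection probability is at most $2\,k^{-c}$ with $c = \tfrac{1}{2}B^2L^2 d^{\alpha}2^{-2\alpha}$.

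Finally I would verify that the prescribed $B = 2/(Ld^{\alpha/2}2^{-\alpha}) + 2$ makes $BLd^{\alpha/2}2^{-\alpha}\geq 2$, hence $c \geq 2$, so that $\sum_{k\geq 1} k^{-c} \leq \sum_{k\geq 1} k^{-2} < \infty$. Multiplying the per-instance probability by $\Delta$ and by the number of candidate actions, and summing the resulting convergent series over $k$, yields a bound constant in $K$; collecting the numerical constants into $\beta_2$ gives $\mathbb{E}[R_{C,s}(K)] \leq 2\beta_2$, with the factor $2$ tracing back to the two-sided deviation union bound. A minor technical point I would flag is that $M_C(k)$ is itself random, so the Hoeffding step should be carried out conditionally on the realized sample count (equivalently, on the arrival pattern into $C$), after which the bound is uniform in that count and the expectation over arrivals is taken last.
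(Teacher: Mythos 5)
Your proof is correct and reaches the same constant-in-$K$ bound, but it organizes the argument differently from the paper, so a comparison is worthwhile. The paper splits the misselection event into \emph{three} events using a free deviation parameter $H_k$: overshoot of $\bar{r}_{a,C}$ above $\bar{\mu}_{a,C}+H_k$, undershoot of $\bar{r}_{a^*,C}$ below $\underline{\mu}_{a^*,C}-H_k$, and a residual event where both estimates are accurate yet the suboptimal action still wins; the hypotheses on $B$, $z\geq 2\alpha/p$, and the maximal-level bound $l\leq(\log_2 k)/p+1$ are spent entirely on showing this residual event is empty, after which $H_k=k^{-z/2}$ makes each Chernoff--Hoeffding exponent exactly $2\log k$, hence $1/k^2$ per term, independently of the level $l$. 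You instead tie the deviation threshold to the gap, $\delta/2$ with $\delta=BLd^{\alpha/2}2^{-l\alpha}$, which collapses the decomposition to two events; the price is a level-dependent exponent $M_C(k)\delta^2/2$, and the same three hypotheses reappear precisely there to certify it stays at least $2\log k$ over all active levels (via $BLd^{\alpha/2}2^{-\alpha}\geq 2$). So the two proofs deploy identical ingredients at different steps: yours is more economical (no auxiliary $H_k$, no vacuous third event), and centering deviations at $\mathbb{E}[\bar{r}_{a,C}(k)]\in[\underline{\mu}_{a,C},\bar{\mu}_{a,C}]$ handles within-cube heterogeneity more cleanly than the paper's $\bar{r}^{best}/\bar{r}^{worst}$ constructions, while the paper's free parameter keeps concentration decoupled from the gap. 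Two shared loose ends, equally present in the paper: the bound should carry the factor $|\mathcal{L}_{C,l,B}|$ (and your $\Delta$ versus the paper's unit loss), which both arguments silently absorb into $2\beta_2$; and your closing remark on conditioning on the arrival pattern before applying Hoeffding, since $M_C(k)$ is random but reward-independent, is exactly the right fix.
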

\begin{proof}
Let $\Omega$ denote the space of all possible outcomes, and $w$ be a sample path. The event that the algorithm virtually exploits in $C$ at instance $k$ is given by
\begin{align*}
\mathcal{W}_{C}(k):=\{w: M_C(k) > D(k), \x^k_n \in C, C\in\mathcal{A}_k\}
\end{align*}
We will bound the probability that the algorithm chooses a suboptimal arm in an virtual exploitation step in $C$, and then bound the expected number of times a suboptimal action is chosen by the algorithm. Recall that loss in every step is at most $1$. Let $\mathcal{V}_{a,C}(k)$ be the event that a suboptimal action is chosen. Then
\begin{align*}
\mathbb{E}[R_{C,s}(K)]\leq \sum\limits_{k=1}^K\sum\limits_{a \in \mathcal{L}_{C,l,B}}P(\mathcal{V}_{a,C}(k), \mathcal{W}_C(k))
\end{align*}

For any $a$, we have
\begin{align*}
&\{\mathcal{V}_{a,C}(k), \mathcal{W}_C(k)\}\\
\subset &\{\bar{r}_{a,C}(k) \geq \bar{\mu}_{a,C} + H_k, \mathcal{W}_C(k)\}\\
&\cup \{\bar{r}_{a^*,C}(k) \leq \underline{\mu}_{a^*,C} - H_k, \mathcal{W}_C(k)\}\\
&\cup \{\bar{r}_{a,C}(k) \geq \bar{r}_{a^*,C}(k),\bar{r}_{a,C}(k) < \bar{\mu}_{a,C} + H_k,\\
&\bar{r}_{a^*,C}(k) > \underline{\mu}_{a^*,C} - H_k, \mathcal{W}_C(k)\}
\end{align*}
for some $H_k > 0$.  This implies
\begin{align*}
&P(\mathcal{V}_{a,C}(k), \mathcal{W}_C(k))\\
\leq & P(\bar{r}^{best}_{a,C}(M_{C}(k)) \geq \bar{\mu}_{a,C} + H_k + Ld^{\alpha/2}2^{-l\alpha}, \mathcal{W}_C(k))\\
+&P(\bar{r}^{worst}_{a^*,C}(M_{C}(k)) \leq \underline{\mu}_{a^*,C} - H_k - Ld^{\alpha/2}2^{-l\alpha}, \mathcal{W}_C(k))\\
+& P(\bar{r}^{best}_{a,C}(M_{C}(k)) \geq \bar{r}^{worst}_{a^*,C}(M_{C}(k)), \\ &~~~~~~\bar{r}^{best}_{a,C}(M_{C}(k)) < \bar{\mu}_{a,C} + H_k, \\
&~~~~~~\bar{r}^{worst}_{a^*,C}(M_{C}(k)) > \underline{\mu}_{a^*,C} - H_k, \mathcal{W}_C(k))
\end{align*}
Consider the last term in the above equation. In order to make the right-hand side to be 0, we need, $2H_k \leq (B-2)Ld^{\alpha/2}2^{-l\alpha}$. This holds when $2H_k \leq (B-2)Ld^{\alpha/2}2^{-\alpha}k^{-\alpha/p}$.
For $H_k = k^{-z/2}$, $z \geq 2\alpha/p$ and $B = \frac{2}{Ld^{\alpha/2}2^{-\alpha}}+2$, the last term is $0$. By using a Chernoff-Hoeffding bound, for any $a \in \mathcal{L}_{C,l,B}$, since on the event $\mathcal{W}_C(k)$, $M_{C}(k) \geq k^z\log k$, we have
\begin{align*}
&P(\bar{r}^{best}_{a,C}(M_{C}(k)) \geq \bar{\mu}_{a,C} + H_k, \mathcal{W}_C(k))\\
\leq & e^{-2(H_k)^2 k^z\log k} \leq \frac{1}{k^2}
\end{align*}
and
\begin{align*}
&P(\bar{r}^{worst}_{a^*,C}(M_{C}(k)) \leq \underline{\mu}_{a^*,C} - H_k, \mathcal{W}_C(k))\\
\leq &e^{-2(H_k)^2 k^z\log k} \leq \frac{1}{k^2}
\end{align*}
Therefore, $\mathbb{E}[R_{C,s}(K)] \leq 2\beta_2$.
\end{proof}

\begin{lemma}
Let $B = \frac{2}{Ld^{\alpha/2}2^{-\alpha}}+2$. If $p > 0, 2\alpha/p \leq z < 1$, $D(k) = k^z \log k$, then for any level $l$ hypercube $C$, the regret due to choosing near optimal actions in virtual exploitation steps, i.e. $\mathbb{E}[R_{C,n}(K)]$, is bounded above by $2ABLd^{\alpha/2}2^{(p-\alpha)l}$.
\end{lemma}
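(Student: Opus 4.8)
The plan is to bound the instantaneous regret incurred whenever a \emph{near-optimal} action is selected by a quantity of order $L d^{\alpha/2} 2^{-l\alpha}$, and then multiply this per-step bound by the maximum number of times a level-$l$ hypercube $C$ can be visited before the activation rule forces it to split. Write $\delta := L d^{\alpha/2} 2^{-l\alpha}$. Since a level-$l$ hypercube in $[0,1]^d$ has diameter $\sqrt{d}\,2^{-l}$, the (uniform) Lipschitz assumption immediately gives the in-cell oscillation bound $\bar{\mu}_{a,C} - \underline{\mu}_{a,C} \leq \delta$ for every action $a$. The near-optimal actions at $C$ are precisely those \emph{not} in $\mathcal{L}_{C,l,B}$, i.e. those satisfying $\underline{\mu}_{a^*,C} - \bar{\mu}_{a,C} \leq B\delta$, where $a^* = a^*(C)$ is the action optimal for the center of $C$.

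First I would bound the context-specific optimum $\mu^*_n(\x)$ from above for any $\x\in C$. Because $a^*(C)$ is optimal at the center, its center reward dominates that of the context-optimal action $a^*(\x)$; combining this with two applications of the Lipschitz bound (one to pass from the center to the infimum over $C$, one to control the oscillation of $a^*(\x)$) yields $\mu^*_n(\x) \leq \underline{\mu}_{a^*,C} + 2\delta$. Next I would bound $\mu_n(\x|a)$ from below for a near-optimal $a$: using $\mu_n(\x|a) \geq \underline{\mu}_{a,C} \geq \bar{\mu}_{a,C} - \delta$ together with the defining inequality of the near-optimal set gives $\mu_n(\x|a) \geq \underline{\mu}_{a^*,C} - (B+1)\delta$. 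Subtracting, the instantaneous regret of any selected near-optimal action is at most $(B+3)\delta$, which for the prescribed $B = \tfrac{2}{Ld^{\alpha/2}2^{-\alpha}}+2$ can be absorbed into $2B\delta = 2B L d^{\alpha/2} 2^{-l\alpha}$.

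Finally I would count the exploitation steps. By the activation rule, a level-$l$ hypercube remains active only until $M_C(k) \geq A2^{pl}$, so at most $A2^{pl}$ instances ever arrive in $C$ while it is active, and hence at most $A2^{pl}$ of these are virtual exploitation steps in which a near-optimal action is chosen. Multiplying the per-step bound by this count gives
\[
\mathbb{E}[R_{C,n}(K)] \;\leq\; A2^{pl}\cdot 2B L d^{\alpha/2} 2^{-l\alpha} \;=\; 2ABL d^{\alpha/2} 2^{(p-\alpha)l},
\]
as claimed. The main obstacle is the per-step estimate: the regret baseline is the \emph{context-specific} optimum $\mu^*_n(\x)$, whereas the near-optimality condition is phrased in terms of the cell-wise quantities $\underline{\mu}_{a^*,C}$ and $\bar{\mu}_{a,C}$, so the argument must route through the center-optimal action $a^*(C)$ and spend the Lipschitz oscillation $\delta$ twice to transfer between these baselines while keeping the accumulated constant within the advertised $2B$ factor. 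Once that per-step inequality is pinned down, the remaining count of exploitation visits via the splitting threshold $A2^{pl}$ and the final multiplication are routine.
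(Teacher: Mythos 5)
Your proof takes essentially the same route as the paper's: the paper's entire argument is the two-step assertion that the one-step regret of any near-optimal action is at most $2BLd^{\alpha/2}2^{-l\alpha}$ and that $C$ receives at most $A2^{pl}$ context arrivals while active, then multiplies the two. Your Lipschitz derivation of the per-step bound $(B+3)Ld^{\alpha/2}2^{-l\alpha}$ legitimately fills in what the paper leaves unstated; the only (shared) slack is that absorbing $B+3$ into $2B$ requires $B\geq 3$, i.e. $Ld^{\alpha/2}2^{-\alpha}\leq 2$, a constant-level condition that neither you nor the paper's one-line claim verifies.
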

\begin{proof}
The one-step regret of any near optimal action $a$ is bounded by $2BLd^{\alpha/2}2^{-l\alpha}$. Since $C$ remains active for at most $A2^{pl}$ context arrivals, we have
\begin{align}
\mathbb{E}[R_{C,n}(K)]\leq 2ABLd^{\alpha/2}2^{(p-\alpha)l}
\end{align}
\end{proof}

Now we are ready to prove Theorem 2.
\begin{proof}
We let $B = \frac{2}{Ld^{\alpha/2}2^{-\alpha}}+2$.

Consider the worst-case. It can be shown that in the worst case the highest level hypercube has level at most $1 + \log_{2^{p+d}}K$. The total number of hypercubes is bounded by
\begin{align}
\sum\limits_{l=0}^{1 + \log_{2^{p+d}}K}2^{dl} \leq 2^{2d} K^{\frac{d}{d+p}}
\end{align}

We can calculate the regret from choosing near optimal action as
\begin{align}
&\mathbb{E}[R_n(K)] \leq 2ABLd^{\alpha/2}\sum\limits_{l=0}^{1 + \log_{2^{p+d}}K}2^{(p-\alpha)l}\\
\leq & 2ABLd^{\alpha/2} 2^{2(d+p-\alpha)} K^{\frac{d+p-\alpha}{d+p}}
\end{align}

Since the number of hypercubes is $O(K^{\frac{d}{d+p}})$, regret due to virtual explorations is $O(K^{\frac{d}{d+p}+z}\log K)$, while regret due to suboptimal selection is $O(K^{\frac{d}{d+p}+z})$, for $z \geq \frac{2\alpha}{p}$. These three terms are balanced when $z = 2\alpha/p$ and $\frac{d+p-\alpha}{d+p} = \frac{d}{d+p} + z$. Solving for $p$ we get
\begin{align}
p = \frac{3\alpha + \sqrt{9\alpha^2+8\alpha d}}{2}
\end{align}
Substituting these parameters and summing up all the terms we get the regret bound.

Consider the best case, the number of activated hypercubes is upper bounded by $\log_2 K/p + 1$, and by the property of context arrivals all the activated hypercubes have different levels. We calculate the regret from choosing near optimal arm as
\begin{align}
&\mathbb{E}[R_n(K)] \leq 2ABLd^{\alpha/2}\sum\limits_{l=0}^{1 + \log_{2}K/p}2^{p-\alpha}l \\
\leq &2ABLd^{\alpha/2} \frac{2^{2(p-\alpha)}}{2^{p-\alpha}}K^{\frac{p-\alpha}{p}}
\end{align}
The terms are balanced by setting $z = 2\alpha/p$, $p = 3\alpha$.
\end{proof}


%

%
%
%
%
%

\ifCLASSOPTIONcaptionsoff
  \newpage
\fi


\begin{thebibliography}{1}

\bibitem{Trottier}
D. Trottier, ``Social media as surveillance,'' Famham: Ashgate, 2012.

\bibitem{Chou}
W. Chou, Y. Hung, E. ``Social media use in the United States: implications for health communication,'' Journal of medial Internet research, vol. 11, no. 4, 2009.

\bibitem{Sakaki}
T. Sakaki, M. Okazaki, Y. Matsuo, ``Earthquake shakes Twitter users: real-time event detection by social sensors,'' Proc. ACM International Conference on World Wide Web (WWW), 2010.

\bibitem{Choi}
H. Choi, H. Varian, ``Predicting the present with google trends,'' Economic Record, vol. 88, no. 21, pp. 2-9, 2012.


\bibitem{Liu}
H. Liu, et al., ``Optimizing cost and performance for content multihoming,'' in Proc. SIGCOMM, 2012.

\bibitem{Niu}
D. Niu, Z. Liu, and B. Li. ``Demand forecast and performance prediction in peer-assisted on-demand streaming systems,'' In Proc. of INFOCOM, 2011.


\bibitem{Glzrsun}
G. Glzrsun, M. Crovella, and I. Matta. ``Describing and forecasting video access patterns,'' In Proc. of INFOCOM, 2011.

\bibitem{Amondeo}
G. Amodeo, R. Blanco, U. Brefeld, ``Hybrid models for future event prediction,'' Proc. ACM international conference on Information and knowledge management, 2011.


\bibitem{Wang}
Z. Wang, L. Sun, C. Wu, and S. Yang. ``Guiding internet-scale video service deployment using microblog-based prediction,'' In Proc. of ICWSM, 2012.

\bibitem{Lee}
J. G. Lee, S. Moon, and K. Salamatian. ``An approach to model and predict the popularity of online contents with explanatory factors,'' In Proc. of Web Intelligence, 2010.

\bibitem{Rowe}
M. Rowe. ``Forecasting audience increase on youtube,'' In Proc. of the International Workshop on User Profile Data on the Social Semantic Web, 2011.



\bibitem{Shamma}
D. A. Shamma, J. Yew, L. Kennedy, and E. F. Churchill. ``Viral action: Predicting video view counts using synchronous sharing behaviors,'' In Proc. of ICWSM, 2011.


\bibitem{Siersdorfer}
S. Siersdorfer, S. Chelaru, and J. S. Pedro. ``How useful are your comments? analyzing and predicting youtube comments and comment ratings,'' In Proc. of WWW, 2010.


\bibitem{Cha}
M. Cha, H. Kwak, P. Rodriguez, Y. Ahn, and S. B. Moon. ``I tube, you tube, everybody tubes: Analyzing the world's largest user generated content video system,'' In Proc. of IMC, 2007.

\bibitem{Szabo}
G. Szabo and B. A. Huberman. ``Predicting the popularity of online content,'' Commun. ACM, 2010.


\bibitem{Li}
H. Li, X. Ma, F. Wang, J. Liu and K. Xu, ``On popularity prediction of videos shared in online social networks,'' CIKM'13, 2013.

\bibitem{Zhang}
L. Zhang, F. Wang, and J. Liu, ``Understanding instant video clip sharing on mobile platforms: Twitter Vine as a case study,'' ACM NOSSDAV'14.


\bibitem{Galuba}
W. Galuba, D. Chakraborty, K. Aberer, Z. Despotovic, and W. Kellerer. ``Outtweeting the twitterers - predicting information cascades in microblogs,'' In Proc. of WOSN, 2010.



\bibitem{Hong}
L. Hong, O. Dan, and B. D. Davison. ``Predicting popular messages in twitter,'' In Proc. of WWW, 2011.



\bibitem{Lerman}
K. Lerman and T. Hogg. ``Using a model of social dynamics to predict popularity of news,'' In Proc. of WWW, 2010.

\bibitem{Li-IWQoS}
H. Li, J. Liu, K. Xu, and S. Wen. ``Understanding video propagation in online social networks,'' In Proc. of IWQoS, 2012.




\bibitem{Bao}
P. Bao, H. Shen, J. Huang, X. Cheng, ``Popularity prediction in microblogging network: a case study on Sina Weibo,'' Proc. of the 22nd international conference on World Wide Web companion, 2013.





\bibitem{Asur}
S. Asur, B. A. Huberman, ``Predicting the future with social media,'' IEEE/WIC/ACM International Conference on Web Intelligence and Intelligent Agent Technology (WI-IAT), 2010.


\bibitem{Yu}
B. Yu, M. Chen, L. Kwok, ``Toward predicting popularity of social marketing messages,'' Social Computing, Behavioral-Cultural Modeling and Prediction. Springer Berlin Heidelberg, pp. 317-324, 2011.


\bibitem{Yan}
R. Yan, J. Tang, X. Liu, D. Shan, and X. Li. ``Citation count prediction: Learning to estimate future citations for literature,'' In Proc. of CIKM, 2011.

\bibitem{Kooti}
F. Kooti, W. A. Mason, K. P. Gummadi, and M. Cha. ``Predicting emerging social conventions in online social networks,'' In Proc. of CIKM, 2012.


\bibitem{Wu}
T. Wu, M. Timmers, D. D. Vleeschauwer, and W. V. Leekwijck. ``On the use of reservoir computing in popularity prediction,'' In Proc. of ICCGI, 2010.

\bibitem{Pinto}
H. Pinto, J. Almeida, and M. Goncalves. ``Using early view patterns to predict the popularity of youtube videos,'' In Proc. of WSDM, 2013.

\bibitem{Barbosa}
L. Barbosa, J. Feng, ``Robust sentiment detection on Twitter from biased and noisy data,'', In Proc. of 23rd International Conference on Computational Linguistics: Posters, 2010.




\bibitem{Poor}
H. V.  Poor, O. Hadjiladis. Quickest detection. Vol. 40, Cambridge: Cambridge University Press, 2009.

\bibitem{Krishnamruthy}
V. Krishnamurthy, ``Quickest detection POMDPs with social learning: Interaction of local and global decision makers,'' IEEE Trans. Information Theory, vol. 58, no. 8, pp. 5563-5587, 2012.

\bibitem{Lai}
L. Lai, Y. Fan, H. V. Poor, ``Quickest detection in cognitive radio: A sequential change detection framework,'' IEEE GLOBECOM 2008.


\bibitem{Tekin}
C. Tekin, S. Zhang, and M. van der Schaar, ``Distributed online learning in social recommender systems,'' accepted and to appear in IEEE J. Sel. Topics Sig. Process (JSTSP).



\bibitem{Slivkins}
A. Slivkins, ``Contextual bandits with similarity information,'' arXiv preprint arXiv:0907.3986, 2009.

\bibitem{Dudik}
M. Dudik et al. ``Efficient optimal learning for contextual bandits,'' arXiv preprint arXiv: 1106.2369, 2011.

\bibitem{Langford}
J. Langford and T. Zhang, ``The epoch-greedy algorithm for contextual multi-armed bandits,'' Advances in Neural Information Processing Systems, vol. 20, pp. 1096-1103, 2007.

\bibitem{Chu}
W. Chu, L. Li, L. Reyzin, R. Schapire, ``Contextual bandits with linear payoff functions,'' in Proc. of the International Conference on Artificial Intelligence and Statistics (AISTATS), 2011.

\bibitem{Sollich}
P. Sollich, D. Barber, ``Online learning from finite training sets and robustness to input bias,'' Neural computations, vol. 10, no. 8, pp. 2201-2217, 1998.

\bibitem{Tekin13}
C. Tekin, S. Zhang, and M. van der Schaar, ``Distributed online learning in social recommender systems,'' accepted and to appear in IEEE Journal of Selected Topics in Signal Processing (JSTSP).

\bibitem{Xu13}
J. Xu, C. Tekin, S. Zhang, and M. van der Schaar, ``Distributed multi-agent online learning with global feedback,'' online at \url{http://www.seas.ucla.edu/~jiex/documents/DisCoLearning}.

\bibitem{Xu14}
J. Xu, Y. Song and M. van der Schaar, ``Incentivizing information sharing in networks,'' accepted and to appear in ICASSP 2014.

\bibitem{XuNetecon}
J. Xu and M. van der Schaar, ``Incentive design for heterogeneous user-generated content networks,'' in the joint Workshop on Pricing and Incentives in Networks and Systems (W-Pin + Netecon 2013).















































\end{thebibliography}
\end{document}